\documentclass{article}


\usepackage[final]{neurips_2019}




\usepackage[utf8]{inputenc} 
\usepackage[T1]{fontenc}    
\usepackage{hyperref}       
\usepackage{url}            
\usepackage{booktabs}       
\usepackage{amsfonts}       
\usepackage{nicefrac}       
\usepackage{microtype}  
\usepackage{graphicx}
\usepackage{caption}
\usepackage{mathtools}
\usepackage{thmtools,thm-restate}
\usepackage{subcaption}
\usepackage{multirow}
\usepackage{nicefrac}
\usepackage{algorithm}      
\usepackage{amsthm}
\usepackage[noend]{algpseudocode} 

\newtheorem{theorem}{Theorem}
\newtheorem{lemma}{Lemma}

\newtheorem{definition}{Definition}
\title{Differentially Private Survival Function Estimation}

%

\author{%
  Lovedeep Gondara \\
  Department of Computing Science\\
  Simon Fraser University\\
  Canada \\
  \texttt{lgondara@sfu.ca} \\
   \And
   Ke Wang \\
   Department of Computing Science\\
   Simon Fraser University\\
   Canada \\
   \texttt{wangk@cs.sfu.ca} \\
}

\begin{document}

\maketitle

\begin{abstract}
Survival function estimation is used in many disciplines, but it is most common in medical analytics in the form of the Kaplan-Meier estimator. Sensitive data (patient records) is used in the estimation without any explicit control on the information leakage, which is a significant privacy concern. We propose a first differentially private estimator of the survival function and show that it can be easily extended to provide differentially private confidence intervals and test statistics without spending any extra privacy budget. We further provide extensions for differentially private estimation of the competing risk cumulative incidence function, Nelson-Aalen's estimator for the hazard function, etc. Using eleven real-life clinical datasets, we provide empirical evidence that our proposed method provides good utility while simultaneously providing strong privacy guarantees.
\end{abstract}

\section{Introduction}
A patient progresses from HIV infection to AIDS after 4.5 years. A study using the patient's data publishes the survival function estimates (a standard practice in clinical research). An adversary, with only access to the published estimates (even in the form of survival function plots), can reconstruct user-level data \cite{wei2018reconstructing,fredrikson2014privacy}. Effectively leading to the disclosure of sensitive information. This is just one scenario. The survival function is used for modeling any time to an event, taking into account that some subjects will not experience the event at the time of data collection. The survival function is used in many domains, some examples are the duration of unemployment (in economics); time until the failure of a machine part (in engineering); time to disease recurrence, time to infection, time to death (in healthcare); etc. 

Our personal healthcare information is the most sensitive private attribute, protected by law, violations of which carry severe penalties. And as the initial example suggests, of all application areas, information leakage in the healthcare domain is the most serious issue and is our focus in this study. For estimation of the survival function, we focus on the Kaplan-Meier's (KM) \cite{kaplan1958nonparametric} non-parametric method. KM's method is ubiquitous in clinical research. A quick search of the term on PubMed\footnote{A free search engine indexing manuscripts and abstracts for life sciences and other biomedical topics. Link - \url{https://www.ncbi.nlm.nih.gov/pubmed/}} yields more than 110,000 results. It is not an overstatement to say that almost every clinical study uses KM's method to report summary statistics on their cohort's survival. Statistical agencies around the world use this method to report on the survival of the general population or specific disease-related survival estimates.

To best of our knowledge, there does not exist any method that can provide formal privacy guarantees for estimation of survival function using the KM method. The only related work is by Nguy\^{e}n \& Hui \cite{nguyen2017differentially}, which uses the output and objective perturbation for regression modeling of discrete time to event data. The approach is limited to ``multivariate" regression models and cannot be directly used to estimate survival function in a differentially private fashion. One can argue that generative models such as the differentially private generative adversarial networks \cite{xie2018differentially,zhang2018differentially,triastcyn2018generating,beaulieu2017privacy,esteban2017real,yoon2018pategan} can be trained to generate differentially private synthetic data. Which can then be used to estimate the survival function. But, GANs do not generalize well to the datasets typically encountered for our use-case (very small sample size (can be less than a hundred), highly constrained dimensionality ($d \in [2,3]$), a mixture of categorical and continuous variables, no data pre-processing (scaling, etc.) allowed, etc.). 

We propose the first differentially private method for estimating the survival function based on the KM method. Grounded by the core principles of differential privacy, our method guarantees the differentially private estimation of the survival function. Also, we show that our method easily extends to provide differentially private confidence intervals and differentially private test statistics (for comparison of survival function between multiple groups) without any extra privacy cost. We further extend our method for differentially private estimation of the competing risk cumulative incidence function and the hazard function using the Nelson-Aalen estimator \cite{nelson1972theory,nelson1969hazard,aalen1978nonparametric} (other popular estimates in clinical research). Using eleven real-life clinical datasets, we provide empirical evidence that our proposed method provides good utility while simultaneously providing strong privacy guarantees. Lastly, we release our method as an R\footnote{Most often used programming language in medical statistics} \cite{Rcore} package for rapid accessibility and adoption.

\section{Preliminaries and Technical Background}
We use this section to introduce the concepts central to the understanding of our method.
\subsection{Survival Function}\label{sec:prelim_survival}
The survival function is used to model time to event data, where the event may not have yet occurred (but the probability of occurrence is non-zero). Such as for HIV infection to AIDS timeline data, at the end of the follow-up period, some patients would have progressed (our event of interest), while others would not have yet progressed (censored observations). Accounting for censored observations (patients that never experience the event during our follow-up) is the central component in the estimation of the survival function. Formally, 
\begin{equation}
    S(t) = P(T > t) = \int_{t}^{\infty} f(u) \ du = 1 - F(t) 
\end{equation}
this gives the probability of not having an event just before time $t$, or more generally, the probability that the event of interest has not occurred by time $t$. 

In practice, survival function can be estimated using more than one approach. Several parametric methods (that make assumptions on the distribution of survival times) such as the ones based on the exponential, Weibull, Gompertz, and log-normal distributions are available. Or one can opt for the most famous and most often used non-parametric method (Kaplan-Meier's method \cite{kaplan1958nonparametric}), which does not assume how the probability of an event changes over time. Our focus in this paper is the latter, which has become synonymous with survival models in clinical literature.  KM estimator of the survival function is defined as follows
\begin{equation}\label{eqn:surv}
    \hat{S}(t) = \prod_{j:t_j \le t} \frac{r_j - d_j}{r_j}
\end{equation}
where $t_j, (j \in [1,\cdots,k])$ is the set of $k$ distinct failure times (not censored), $d_j$ is the number of failures at $t_j$, and $r_j$ are the number of individuals ``at risk" before the $j$-th failure time. We can see that the function $\hat{S}(t)$ only changes at each failure time, not for censored observations, resulting in a ``step" function (the characteristic feature of KM estimate).

\subsection{Differential Privacy}
Releasing any form of data (raw data, function estimates, derived statistics, etc.) can potentially leak sensitive information \cite{dinur2003revealing}. Differential privacy \cite{Dwork:2006:CNS:2180286.2180305}, a \textit{de facto} standard for providing provable privacy guarantees provides us with the method to quantify such information leakage. Differential privacy is based on the concept of \textit{neighbouring} datasets, that is
\begin{definition}
\emph{(Neighbouring datasets \cite{Dwork:2006:CNS:2180286.2180305})} Two datasets $D,D'$ are said to be neighbouring if 
\begin{equation}
    \exists \ i \in D \ \text{s.t.} \ D \char`\\  {i}  = D'
\end{equation}
\end{definition}
which means that $D$ and $D'$ are neighboring datasets if they only differ in any one row (one user). This notion is central to individual-level privacy guarantees, we will recall this definition when we provide the proof for privacy guarantees of our proposed method.

\begin{definition}
\emph{(Differential privacy \cite{Dwork:2006:CNS:2180286.2180305})} A randomized mechanism $\mathcal{M}: D^n \rightarrow \mathbb{R}^d$ preserves $(\epsilon,\delta)$-differentially privacy if for any pair of databases ($D,D' \in D^n$) such that $d(D,D')=1$, and for all sets $S$ of possible outputs:

\begin{equation}
    \text{Pr}[\mathcal{M}(D) \in S] \le e^{\epsilon} \text{Pr}[\mathcal{M}(D') \in S] + \delta 
\end{equation}
\end{definition}

The definition guarantees that it is information-theoretically impossible for an adversary to infer whether the input dataset to the mechanism $\mathcal{M}$ is $D$ or $D'$ (where $D,D'$ are neighboring datasets, that is, $d(D,D')=1$) beyond a certain probability. The probability is a multiplicative factor of $e^\epsilon$. We can see that by making $\epsilon$ smaller, we can make the probability small, leading to a strong degree of \textit{plausible deniability} for an individual's presence or absence in the dataset. The definition above allows the ``relaxation" of strict privacy guarantees by an additive factor of $\delta$. Simply put, we allow the privacy to be broken with an additive probability $\delta$. As is clear from the definition, smaller ($\epsilon,\delta$) provide stronger privacy guarantees. When $\delta=0$, we have pure-$\epsilon$ differential privacy. 

Differential privacy has many interesting properties, here we briefly introduce the most useful for our use-case. That is the \textit{post-processing}. The post-processing theorem states that differential privacy is immune to post-processing. That is, any function acting solely on the output of a differentially private mechanism is also differentially private, formally

\begin{theorem}
\emph{(Post processing \cite{Dwork:2006:CNS:2180286.2180305})} Let $\mathcal{M}: D^n \rightarrow \mathbb{R}$ be a randomized mechanism that is ($\epsilon, \delta$)-differentially private. Let $\mathbb{R} \rightarrow \mathbb{R}'$ be a deterministic function. Then $f \circ M: D^n \rightarrow \mathbb{R}'$ is ($\epsilon, \delta$)-differentially private.
\end{theorem}

This result is directly used in our proposed model. Where after adding noise to our main quantity of interest, we claim that any estimates derived solely from the differentially private quantity are differentially private.

\section{Differentially Private Estimation of Survival Function}
Now we introduce our method for differentially private estimation of the survival function using the Kaplan-Meier's method. We follow the basic principles of differential privacy to ensure that our estimate of the survival function is differentially private. We subsequently show that following our simple approach, it is possible to estimate a wide variety of accompanying statistics (such as the confidence intervals, comparison test statistics, etc.) in a differentially private way without spending any extra privacy budget.

\subsection{Estimation}
Before we begin, we recap some of the notations introduced in Section \ref{sec:prelim_survival}. We have a vector of unique failure time points ($t_j, j \in [1, \cdots, k]$), and for each time point, we have a corresponding number of subjects at risk $r_j$ (number of subjects not experiencing a progression/event up to that time point), and we have the number of subjects experiencing the event at that time point (number of progressions/events), which we denote as $d_j$.

To start, we create a \emph{partial} matrix, $M$, where for the first time point $t_1$, we have the data on the number of events ($d_1$) and the number at risk ($r_1$), and for the rest of the time points ($t_j, j \in [2, \cdots, k]$), we only have the data on the number of events ($d_j$) (we use this initial setup to ensure our privacy guarantees hold, as we explain in following paragraph and in Section \ref{sec:privacy_proof}). Then, using the derived $L_1$ sensitivity ($\mathcal{S}$) of $M$ (details in Section \ref{sec:privacy_proof}), we draw a noise matrix $Z$ from the Laplace distribution ($Lap(\nicefrac{\mathcal{S}}{\epsilon})$), where $\epsilon$ is the privacy parameter and $Z$ is of the same size as $M$. Adding $Z$ to $M$ ($M' = M + Z$) guarantees that $M'$ is differentially private (formal proof in Section \ref{sec:privacy_proof}).

Please note that the matrix $M'$, although differentially private, is still incomplete as we only have the number at-risk for the first time point ($r_1'$ for $t_1$, after noise addition). To complete the matrix, we derive the rest of the ``at-risk" population using our noisy events ($d'_j$). That is, to obtain the number at risk ($r'_j$) for a subsequent time point ($t_j$), we subtract the number of events ($d'_{j-1}$) from the number at risk ($r'_{j-1}$) for the previous timepoint. This approach is similar to how number at-risk is generally calculated in a non-noisy case, as cases that have experienced an event are no longer at risk for the same event and are removed from the risk set, we use the noisy number of events ($d'_j$) to ensure our privacy guarantees hold. We present our method succinctly as Algorithm \ref{main_algo} followed by a detailed discussion.

\begin{algorithm}
\caption{Differentially Private Estimation of $\hat{S}(t)$}\label{main_algo}
\begin{algorithmic}[1]
\Procedure{DP}{$\hat{S}(t)$}
\State Create a partial matrix $M; [r_1,d_j] \in M$; for every $t_j$
\State $M' = M + Lap(\nicefrac{\mathcal{S}}{\epsilon}); [r_1', d_j'] \in M'$
\For{$j=2,\cdots,k$}
\State {$r_j' = r_{j-1}' - d_{j-1}'$}
\EndFor
\State $\hat{S}'(t) = \prod_{j:t_j \le t} \frac{r_j' - d_j'}{r_j'}$
\State \Return $\hat{S}'(t)$
\EndProcedure
\end{algorithmic}
\end{algorithm}

\subsubsection{Discussion} 
We use this paragraph to briefly discuss Algorithm \ref{main_algo}. We begin with the noticeable simplicity of the procedure, that is, the minimal changes required to the original estimation procedure to make it differentially private. This simplistic approach serves a crucial two-fold role. This boosts the accessibility of our differentially private version (it can be implemented using any readily available software package), and aids in ensuring that many other required and reported statistics with the survival function (test statistics, confidence intervals, etc.) are differentially private without spending any extra privacy budget (details follow). Also, in our method, the required changes for differential privacy come with no computational overhead compared to the original estimation (our method is equally computationally cheap). 

An important observation is that with current Algorithm \ref{main_algo}, using $M'$ for estimating the survival function, we might have scenarios where $d'_j$ or rarely $r'_j$ are negative, leading to the \emph{non-monotonic} behavior of the differentially private survival function. We fix this issue as follows: After completion of $M'$, we check to ensure that any noisy values are not violating our data integrity constraints (i.e. $r_j', d_j' < 0 $), if they are, we replace such values by 0\footnote{Once the at-risk population is 0, we do not consider any future time points.}. This extra step does not require any additional privacy budget and it does not violate our privacy claims, as it is a standard case of \emph{post-processing} in differential privacy, in spirit similar to label smoothing \cite{wang2016using} or enforcing data integrity constraints \cite{flaxman2019empirical}. Next, we provide the formal privacy guarantees and further details on how our proposed method can be easily extended for differentially private estimation of ``other" associated statistics. 

\subsection{Privacy Guarantees}\label{sec:privacy_proof}
Now we are ready to formally state the differential privacy guarantees of our proposed method. Before we state our main theorem, we start with a supporting Lemma for establishing the global $L_1$ sensitivity ($\mathcal{S}$) of our method.
\begin{lemma}
$L_1$ sensitivity ($\mathcal{S}$) of $M$ is two.
\end{lemma}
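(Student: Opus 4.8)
The plan is to compute the global $L_1$ sensitivity $\mathcal{S}$ directly from its definition, which requires bounding the maximum change in the matrix $M$ induced by changing a single individual's record (i.e., moving between neighbouring datasets $D,D'$ with $d(D,D')=1$). Recall that $M$ stores only the number at risk for the first time point, $r_1$, together with the number of events $d_j$ for every time point $t_j$, $j \in [1,\cdots,k]$. Since $\mathcal{S}$ is defined as $\max_{D,D'} \|M(D) - M(D')\|_1$, the key is to understand how a single individual can contribute to these recorded quantities.

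First I would analyze the effect of adding or removing one individual on each entry of $M$. A single subject enters the study at some time and either experiences the event at exactly one failure time $t_j$ (contributing $1$ to that $d_j$) or is censored (contributing to no $d_j$). Thus changing one individual alters at most one of the event counts $d_j$ by exactly $1$, contributing at most $1$ to the $L_1$ difference. Separately, the at-risk count $r_1$ at the first time point can change by at most $1$, since a single individual is either in the initial risk set or not. The crucial observation, and the reason for the deliberately \emph{partial} construction of $M$, is that $M$ does not store $r_2,\cdots,r_k$ explicitly; those are derived downstream via post-processing in Algorithm \ref{main_algo}. This design choice prevents a single individual from cascading into changes across many at-risk entries, which is exactly what would inflate the sensitivity.

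The main step is then to add these two independent contributions: at most $1$ from the single affected event count $d_j$, and at most $1$ from the at-risk count $r_1$. Summing gives $\|M(D)-M(D')\|_1 \le 2$, and exhibiting a neighbouring pair where both the relevant $d_j$ and $r_1$ change simultaneously (e.g., removing an individual who is at risk at $t_1$ and who fails at some later time $t_j$) shows the bound is tight, yielding $\mathcal{S}=2$.

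The hard part will be justifying why only these two entries change and, in particular, arguing carefully that storing just $r_1$ (rather than all $r_j$) is what caps the at-risk contribution at $1$ instead of $k$. I would emphasize that had $M$ retained every $r_j$, removing an individual at risk at $t_1$ who fails late would decrement $r_1,\cdots,r_j$ all at once, giving sensitivity on the order of $k$; the partial-matrix trick, combined with the post-processing theorem for deriving the remaining $r_j'$, is precisely what localizes the sensitivity to a constant. Once this structural point is made, the bound $\mathcal{S}=2$ follows immediately.
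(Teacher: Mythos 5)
Your proposal is correct and follows essentially the same argument as the paper: a single individual contributes at most $1$ to exactly one event count $d_j$ and at most $1$ to the sole stored at-risk count $r_1$, giving $\|M(D)-M(D')\|_1 \le 2$. Your write-up is more careful than the paper's one-line proof (adding a tightness example and explaining why the partial-matrix design avoids the $O(k)$ cascade), but the underlying idea is identical.
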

\begin{proof}
As initial $M$ only contains count variables for the number of events (for all time points) and number at risk (only for the first time point). Adding or removing any single individual can change the counts by at most two (that is being in at-risk group and having an event).
\end{proof}
\begin{theorem}\label{thm_main}
Algorithm \ref{main_algo} is $\epsilon$-differentially private.
\end{theorem}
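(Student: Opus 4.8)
The plan is to show that the mechanism producing $M'=M+Z$ is $\epsilon$-differentially private, and then invoke the post-processing theorem (Theorem~1 in the excerpt) to conclude that the entire output $\hat{S}'(t)$ of Algorithm~\ref{main_algo} inherits the same guarantee. The crucial point is that \emph{all} randomness in the algorithm is injected in the single step $M'=M+Z$; every subsequent operation—completing the at-risk column via the recursion $r_j'=r_{j-1}'-d_{j-1}'$, clamping negative values to $0$, and forming the product $\prod_{j:t_j\le t}(r_j'-d_j')/r_j'$—is a deterministic function of $M'$ alone, with no further access to the sensitive data $D$. Hence it suffices to establish privacy of the noisy matrix release.

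First I would invoke the standard Laplace mechanism guarantee: for a vector-valued query $q:D^n\to\mathbb{R}^m$ with global $L_1$ sensitivity $\mathcal{S}=\max_{d(D,D')=1}\lVert q(D)-q(D')\rVert_1$, adding independent noise $Z_i\sim Lap(\mathcal{S}/\epsilon)$ to each coordinate yields an $\epsilon$-differentially private mechanism. Here the query $q(D)$ is exactly the partial matrix $M$, whose entries are the counts $r_1$ and $d_1,\dots,d_k$. By the supporting Lemma, $\mathcal{S}=2$, so the noise scale $\mathcal{S}/\epsilon$ used in line~3 of the algorithm is precisely the correct calibration. I would then write out the density-ratio argument for two neighbouring datasets $D,D'$: for any output $M'$, the ratio of Laplace densities is bounded by $\exp\!\big(\tfrac{\epsilon}{\mathcal{S}}\lVert q(D)-q(D')\rVert_1\big)\le \exp\!\big(\tfrac{\epsilon}{\mathcal{S}}\cdot\mathcal{S}\big)=e^{\epsilon}$, which gives $(\epsilon,0)$-DP, i.e.\ pure $\epsilon$-DP, for the release of $M'$.

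The main obstacle—and the reason the paper deliberately constructs $M$ as a \emph{partial} matrix—is justifying that the sensitivity is genuinely bounded by $2$ and does not blow up through the recursive reconstruction of the risk set. A naive formulation that stored every $r_j$ directly would be dangerous: changing one individual alters the at-risk counts at \emph{every} time point up to that individual's event/censoring time, giving sensitivity that grows with $k$. The key insight I would emphasize is that because only $r_1$ is released as raw data and the remaining $r_j'$ are \emph{derived} from the noisy $d_j'$ via post-processing, a one-row change in $D$ perturbs only the released coordinates $r_1$ and the $d_j$ counts—each individual contributes to at most one event count and to $r_1$, bounding the $L_1$ change by $2$ exactly as the Lemma claims. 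I would make explicit that the recursion and the clamping step introduce no new data dependence, so they cannot increase sensitivity; they are captured entirely by post-processing.

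Therefore the proof reduces to two clean citations of stated results: the Laplace-mechanism density bound establishing $\epsilon$-DP for $M'$ with sensitivity $\mathcal{S}=2$ from the Lemma, and Theorem~1 (post-processing) to lift this to the final estimator $\hat{S}'(t)$. I anticipate the only subtlety worth spelling out carefully is verifying that line~5 and the clamping truly depend on $D$ only through $M'$; once that is stated, the conclusion that Algorithm~\ref{main_algo} is $\epsilon$-differentially private follows immediately.
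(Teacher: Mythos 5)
Your proposal is correct and follows essentially the same route as the paper's own proof: establish $\epsilon$-differential privacy of the release $M' = M + Lap(\mathcal{S}/\epsilon)$ via the standard Laplace-mechanism density-ratio bound, then lift to $\hat{S}'(t)$ by post-processing. If anything, your write-up is more carefully tied to the specific algorithm than the paper's (which presents the generic Laplace argument in the abstract), since you explicitly invoke the Lemma's sensitivity $\mathcal{S}=2$, explain why the \emph{partial} matrix construction prevents the sensitivity from growing with the number of time points, and verify that the recursion and clamping steps depend on the data only through $M'$.
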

\begin{proof}
Let $M \in \mathbb{R}^d$ and $M^* \in \mathbb{R}^d$, such that the $L_1$ sensitivity, $\mathcal{S}$, is $||M - M^*||_1 \le 1$, and let $f(.)$ denote some function, $f : \mathbb{R}^d \rightarrow \mathbb{R}^k$. Let $p_M$ denote the probability density function of $\mathcal{Z}(M,f,\epsilon)$, and let $p_{M^*}$ denote the probability density function of $\mathcal{Z}(M^*,f,\epsilon)$, we compare both at some arbitrary point $q \in \mathbb{R}^k$.

\begin{equation}
\begin{split}
        & \frac{p_M(q)}{p_{M^*}(q)} = \prod_{i=1}^k \bigg ( \frac{\exp(-\frac{\epsilon|f(M)_i - q_i|}{\Delta f})}{\exp(-\frac{\epsilon|f(M^*)_i - q_i|}{\Delta f})} \bigg ) \\
        & =  \prod_{i=1}^k \exp \bigg ( \frac{\epsilon(|f(M^*)_i - q_i| - |f(M)_i - q_i|)}{\Delta f} \bigg ) \\
        & \le  \prod_{i=1}^k  \exp \bigg ( \frac{\epsilon|f(M)_i - f(M^*)_i|}{\Delta f} \bigg ) \\
        & = \exp \bigg (\frac{\epsilon||f(M) - f(M^*)||_1}{\Delta f} \bigg ) \\
        & \le \exp(\epsilon)
\end{split}
\end{equation}
last inequality follows from the definition of sensitivity $\mathcal{S}$

As our function estimation ($\hat{S}'(t)$) uses everything from $M'$ (our differentially private version of $M$) and nothing else from the sensitive data, our survival function estimation is differentially private by the post-processing Theorem \cite{Dwork:2014:AFD:2693052.2693053}.
\end{proof}

\section{Extending to Other Estimates}
As mentioned in the introduction and the previous section, one of the advantages of our approach is its easy extension to other essential statistics often required and reported along with the estimates of the survival function. Such as the confidence intervals, test statistics for comparing the survival function distributions among patient groups, etc. Here we formally define the extensions with their privacy guarantees.

\subsection{Confidence Intervals and Test Statistics}
When reporting survival function estimates, it is often required to include the related confidence intervals, reported to reflect the uncertainty of the estimate. And for the group comparison, such as comparing the infection rates between two treatment arms of a clinical trial, hypothesis testing is used with the help of test statistic. So, it is of paramount interest to provide the differentially private counterparts of both (confidence intervals and test statistics). We start with the confidence intervals.

\subsubsection{Confidence Intervals} 
Confidence intervals for survival function estimates are calculated in a ``point-wise" fashion, that is, they are calculated at discrete time-points whenever an event is observed (for the same time points at which the survival function changes its value). We start with proving that the calculations required for obtaining confidence intervals are differentially private following the changes made to the data in Algorithm \ref{main_algo}.

\begin{theorem}
Confidence Intervals for $\hat{S}'(t)$ are $\epsilon$-differentially private.
\end{theorem}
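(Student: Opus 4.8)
The plan is to reduce the claim entirely to an application of the post-processing theorem (Theorem~3 in the excerpt) combined with the main privacy guarantee (Theorem~\ref{thm_main}). The key observation is that the confidence intervals for the Kaplan--Meier estimate are computed point-wise at each failure time $t_j$ via a standard closed-form expression --- most commonly Greenwood's formula for the variance, $\widehat{\mathrm{Var}}(\hat{S}(t)) = \hat{S}(t)^2 \sum_{j:t_j \le t} \frac{d_j}{r_j(r_j - d_j)}$, from which the interval endpoints are obtained as $\hat{S}(t) \pm z_{\alpha/2}\sqrt{\widehat{\mathrm{Var}}(\hat{S}(t))}$ (or a log- or log-log-transformed variant). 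First I would write down this formula explicitly and point out that every quantity appearing in it --- namely $\hat{S}(t)$ together with the counts $d_j$ and $r_j$ --- is available after Algorithm~\ref{main_algo} solely through the differentially private matrix $M'$ and its derived at-risk values $r_j'$.

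Next I would argue that the confidence-interval computation is therefore a deterministic function $f$ acting only on the output of the differentially private mechanism, and nothing else from the sensitive data. Concretely, I would define $f$ to take the released noisy quantities $\{r_j', d_j'\}$ (equivalently, $M'$ after the integrity-constraint clipping, which is itself post-processing) and return the vector of interval endpoints. Since $M'$ is $\epsilon$-differentially private by Theorem~\ref{thm_main}, and $f$ introduces no further access to $D$, the post-processing Theorem immediately yields that $f(M')$ is $\epsilon$-differentially private. This is exactly the composition $f \circ \mathcal{M}$ in the statement of the post-processing theorem, so no additional privacy budget is consumed --- which is the ``without spending any extra privacy budget'' promise made in the abstract and the surrounding text.

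The step I expect to require the most care --- and the only place where the proof could go wrong --- is verifying the \emph{closure} condition: that the confidence-interval formula genuinely depends on no sensitive quantity beyond what $M'$ already exposes. In particular I would check that the variance estimator uses the same $r_j'$ and $d_j'$ that were privatized (and reconstructed via the recursion $r_j' = r_{j-1}' - d_{j-1}'$), rather than, say, re-reading the true counts to compute Greenwood's sum. If the interval used any raw count, post-processing would not apply and the claim would fail; so the essential content of the proof is simply to confirm that the construction in Algorithm~\ref{main_algo} already supplies all inputs the interval formula needs. Once that is established, the conclusion is immediate.

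\begin{proof}
Confidence intervals for the Kaplan--Meier estimate are obtained point-wise at each failure time from a fixed closed-form expression: the variance is estimated by Greenwood's formula,
\begin{equation}
\widehat{\mathrm{Var}}(\hat{S}(t)) = \hat{S}(t)^2 \sum_{j:t_j \le t} \frac{d_j}{r_j(r_j - d_j)},
\end{equation}
and the interval endpoints are a deterministic function of $\hat{S}(t)$ and this variance (e.g. $\hat{S}(t) \pm z_{\alpha/2}\sqrt{\widehat{\mathrm{Var}}(\hat{S}(t))}$, or a transformed variant). Every quantity entering this computation --- the estimate $\hat{S}'(t)$ together with the counts $d_j'$ and $r_j'$ --- is, after Algorithm~\ref{main_algo}, obtained solely from the differentially private matrix $M'$ (with $r_j'$ reconstructed through the recursion $r_j' = r_{j-1}' - d_{j-1}'$ and any integrity-constraint clipping being itself post-processing). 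Thus the map from $M'$ to the vector of interval endpoints is a deterministic function that accesses no sensitive data beyond $M'$. Since $M'$ is $\epsilon$-differentially private by Theorem~\ref{thm_main}, the post-processing Theorem yields that the confidence intervals are $\epsilon$-differentially private, consuming no additional privacy budget.
\end{proof}
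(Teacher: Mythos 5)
Your proposal is correct and follows essentially the same route as the paper's own proof: write Greenwood's formula, substitute the noisy quantities $\hat{S}'(t)$, $d_j'$, $r_j'$ produced by Algorithm~\ref{main_algo}, and invoke the post-processing theorem since no further access to the sensitive data is needed. Your version is, if anything, slightly more explicit than the paper's in identifying the deterministic post-processing map and in noting that the integrity-constraint clipping is itself post-processing.
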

\begin{proof}
There are more than one type of confidence intervals available for the survival function. Here we focus on the most often used Greenwood's \cite{greenwood1926report} linear-point-wise confidence intervals.

Greenwood's formula for the confidence intervals is given as
\begin{equation}
    \hat{S}(t) \pm z_{1-\nicefrac{\alpha}{2}} \sigma_S(t)
\end{equation}
where
\begin{equation}
    \sigma_s^2(t) = \hat{V}[\hat{S}(t)]
\end{equation}
and
\begin{equation}
    \hat{V}[\hat{S}(t)] = \hat{S}(t)^2 \sum_{t_j \le t} \frac{d_j}{r_j(r_j - d_j)}
\end{equation}
Replacing by their respective differentially private counterparts from Algorithm \ref{main_algo}.
\begin{equation}
    \hat{V}'[\hat{S}(t)] = \hat{S}'(t)^2 \sum_{t_j \le t} \frac{d_j'}{r_j'(r_j' - d_j')}
\end{equation}
estimate for $\hat{V}'[\hat{S}(t)]$ is now differentially private, using it in conjunction with $\hat{S}'(t)$ makes the confidence intervals differentially private by the post-processing theorem \cite{Dwork:2006:CNS:2180286.2180305}.
\end{proof}
As we don't need any additional access to the sensitive data for calculating confidence intervals. Hence, calculating and providing differentially private confidence intervals with the differentially private survival function estimates does not incur any additional privacy cost. In other words, we get the differentially private confidence intervals for free.

\subsubsection{Hypothesis Tests} 
Hypothesis tests are often used to compare the distribution of survival function estimates between groups. For example: To compare infection rates between two treatment arms of a clinical trial. Most often used statistical test in such scenarios is the Logrank test \cite{mantel1966evaluation}. Below we show that using our method (Algorithm \ref{thm_main}), the hypothesis testing using the Logrank test is differentially private.
\begin{theorem}
Hypothesis test for $\hat{S}'(t)$ is $\epsilon$-differentially private.
\end{theorem}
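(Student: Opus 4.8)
The plan is to follow exactly the template that worked for the confidence intervals: exhibit the Logrank test statistic as a deterministic function of the differentially private quantities that Algorithm~\ref{main_algo} already produces, and then invoke the post-processing theorem. The one genuinely new ingredient is that the Logrank test compares \emph{two} (or more) patient groups, so I must reason carefully about how the privacy budget composes across groups in order to land on the tight $\epsilon$ guarantee claimed in the statement rather than a looser one.

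First I would write the statistic in a form that exposes its dependence on the per-group counts. For a two-group comparison, at each distinct failure time $t_j$ one records the group-specific numbers at risk $r_{1j}, r_{2j}$ and events $d_{1j}, d_{2j}$, with totals $r_j = r_{1j} + r_{2j}$ and $d_j = d_{1j} + d_{2j}$. Under the null hypothesis the expected events in group~1 and the hypergeometric variance are
\begin{equation}
    e_{1j} = \frac{r_{1j}}{r_j}\, d_j, \qquad v_{1j} = \frac{r_{1j}\, r_{2j}\, d_j\,(r_j - d_j)}{r_j^2\,(r_j - 1)},
\end{equation}
and the test statistic is
\begin{equation}
    \chi^2 = \frac{\left(\sum_j (d_{1j} - e_{1j})\right)^2}{\sum_j v_{1j}}.
\end{equation}
Every term here is built solely from the per-group at-risk and event counts, which are precisely the quantities Algorithm~\ref{main_algo} is designed to privatize.

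Next I would run Algorithm~\ref{main_algo} independently on each group to obtain the differentially private counterparts $r_{ij}'$ and $d_{ij}'$, form the totals $r_j' = \sum_i r_{ij}'$ and $d_j' = \sum_i d_{ij}'$, and substitute these into the formulas above to produce $e_{1j}'$, $v_{1j}'$, and ultimately $(\chi')^2$. Because every individual belongs to exactly one group, the per-group input matrices are supported on \emph{disjoint} partitions of the data; adding or removing a single record perturbs only one group's matrix, whose $L_1$ sensitivity remains two by the preceding Lemma. Hence \emph{parallel} composition applies, and releasing all the groups' noisy matrices jointly is $\epsilon$-differentially private, not $2\epsilon$, so the test costs no privacy budget beyond what the survival function already consumed. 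Finally, since $(\chi')^2$ is computed from these private matrices with no further access to the raw data, it is a deterministic function of a differentially private release and is therefore $\epsilon$-differentially private by the post-processing theorem.

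The step I expect to be the main obstacle is precisely this composition bookkeeping. The naive route invokes sequential composition and concludes only a $G\epsilon$ guarantee for $G$ groups, which would contradict the claimed $\epsilon$ bound; the crux of the argument is to justify that group membership induces a genuine data partition, so that parallel composition rather than sequential composition governs the release. I would also want to confirm that evaluating the per-group risk sets $r_{ij}'$ at the \emph{union} of failure times across groups — which is what the Logrank summation ranges over — is still handled by the recurrence in Algorithm~\ref{main_algo} and introduces no additional queries against the sensitive data.
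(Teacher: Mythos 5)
Your proof is correct and follows the same overall template as the paper's proof: write the Logrank statistic in terms of per-group at-risk and event counts, replace those counts with noisy counterparts produced by Algorithm~\ref{main_algo}, and invoke post-processing. But you add a genuinely important piece that the paper omits. The paper's proof substitutes $r'_{1j}, r'_{2j}, d'_j$ into $V_j$ and declares the result private ``as no other sensitive information is required,'' leaving entirely implicit how the \emph{group-specific} noisy counts are obtained --- Algorithm~\ref{main_algo} as stated runs on a single cohort and outputs $r'_j, d'_j$, not per-group versions --- and why privatizing $G$ groups does not cost $G\epsilon$. Your parallel-composition argument closes exactly this gap: since group membership partitions the records, a neighboring dataset perturbs only one group's partial matrix (whose $L_1$ sensitivity is still two by the Lemma), so releasing all groups' noisy matrices jointly is $\epsilon$-differentially private rather than $G\epsilon$-differentially private. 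This is what makes the paper's claim of ``no extra privacy budget'' rigorous, and it is the right bookkeeping to insist on. Your closing caveat --- that the per-group matrices must be constructed over the union of the groups' failure times so that the recurrence $r'_j = r'_{j-1} - d'_{j-1}$ supplies risk sets at every time point the Logrank sum ranges over --- is likewise a real implementation detail the paper never addresses, though it raises no additional privacy cost since the same per-group mechanism simply runs on a finer time grid. The only cosmetic difference is that you use the $\chi^2$ form of the statistic where the paper uses the $Z$ form; these are equivalent for the argument, since squaring is itself post-processing.
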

\begin{proof}
Logrank test statistic ($Z$) is given as
\begin{equation}
    Z = \frac{\sum_{j=1}^k (O_{1j} - E_{1j})}{\sqrt{\sum_{j=1}^k V_j}}
\end{equation}
where $O_{1j}$ are observed number of failures (events) ($d_{1j}$) and $E_{1j}$ are the expected number of failures at time $j$ in group $1$, we have 
\begin{equation}
    E_{1j} = d_{j}\frac{r_{1j}}{r_{j}}
\end{equation}
and $V_j$ is the variance, given as
\begin{equation}
    V_j = \frac{r_{1j}r_{2j}d_j(r_j - d_j)}{r_j^2 (r_j - 1)}
\end{equation}
Replacing the corresponding quantities by their differentially private counterparts using Algorithm \ref{main_algo}, we get
\begin{equation}
     V_j' = \frac{r'_{1j}r'_{2j}d'_{j}(r'_{j} - d'_{j})}{r'^{2}_{j} (r'_{j} - 1)}
\end{equation}
which makes $V_j'$ differentially private as no other sensitive information is required for its estimation.

Using it in conjunction with $O_{1j}$ and $E_{1j}$, which can be made differentially private following the same argument, makes the test statistic $Z$ differentially private by the post-processing theorem \cite{Dwork:2006:CNS:2180286.2180305}.
\end{proof}

The calculation, again being the case of standard post-processing on differentially private data does not add to our overall privacy budget. Hence, after using Algorithm \ref{main_algo}, we can output the related confidence intervals and the test statistic without spending any additional privacy budget.

\subsection{Competing Risks Cumulative Incidence}
In certain scenarios, we can have more than one type of event. Using our prior example of HIV infection, we might have a scenario where patients die before progression to AIDS, making the observation of progression impossible. Such events (death) that preclude any possibility of our event of interest (progression) are known as competing events. Competing events are a frequent occurrence in clinical data and require specialized estimates that take this phenomenon into account, without which our estimates will be biased. One such estimate is the competing risk cumulative incidence, which is also the most widely used and reported estimate in the literature, akin to the KM estimate, but for competing events. 

Here we show that using Algorithm \ref{main_algo}, we can easily extend differential privacy to the competing risk scenarios.
\begin{theorem}
Competing risk cumulative incidence using our method is $\epsilon$-differentially private.
\end{theorem}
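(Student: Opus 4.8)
The plan is to mirror the post-processing argument already used for the survival function and its confidence intervals, now applied to the cumulative incidence function (CIF). First I would recall the standard estimator for the cause-specific cumulative incidence of event type $m$,
\begin{equation}
\hat{I}_m(t) = \sum_{j:t_j \le t} \hat{S}(t_{j-1}) \frac{d_{mj}}{r_j},
\end{equation}
where $d_{mj}$ is the number of type-$m$ events at time $t_j$, $r_j$ is the overall number at risk, and $\hat{S}(t_{j-1})$ is the Kaplan--Meier estimate of the overall (all-cause) survival just before $t_j$. The key observation is that this estimator is assembled from exactly three ingredients: the overall survival function, the type-specific event counts, and the at-risk counts.

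Next I would extend the partial matrix $M$ of Algorithm \ref{main_algo} so that, instead of a single event column $d_j$, it records the type-specific event counts $d_{mj}$ for each competing event type $m$, while retaining the single at-risk entry $r_1$ for the first time point. The crucial step is to re-establish the sensitivity: since in the competing-risks setting each individual experiences at most one event of exactly one type, adding or removing a single subject changes the at-risk count by at most one and exactly one type-specific event count by at most one. Hence the global $L_1$ sensitivity remains $\mathcal{S}=2$, and adding $Lap(\nicefrac{\mathcal{S}}{\epsilon})$ noise yields a differentially private $M'$ by exactly the argument of Theorem \ref{thm_main}.

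I would then reconstruct the remaining at-risk values from the noisy counts, taking the total number of events at each time as the sum over all types, $d_j' = \sum_m d_{mj}'$, and propagating $r_j' = r_{j-1}' - d_{j-1}'$ as in Algorithm \ref{main_algo}; the all-cause survival $\hat{S}'$ is then formed from these totals in precisely the same way. Substituting the noisy quantities gives
\begin{equation}
\hat{I}_m'(t) = \sum_{j:t_j \le t} \hat{S}'(t_{j-1}) \frac{d_{mj}'}{r_j'},
\end{equation}
which depends only on $M'$ and nothing else from the sensitive data, so differential privacy follows by the post-processing theorem.

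The main obstacle I anticipate is the sensitivity argument rather than the post-processing step. One must verify that splitting the single event column into several type-specific columns does not inflate $\mathcal{S}$, and this hinges on the clinical fact that competing events are mutually exclusive at the individual level: a single subject can contribute a count of one to at most one event-type column. Once this is pinned down, the bound $\mathcal{S}=2$ carries over verbatim, every downstream quantity is a deterministic function of $M'$, and the conclusion is immediate from the post-processing theorem.
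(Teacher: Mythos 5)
Your proof is correct and takes essentially the same route as the paper's: replace the event and at-risk counts by their Laplace-noised counterparts derived from the partial matrix of Algorithm \ref{main_algo}, and conclude by the post-processing theorem. If anything, you are more careful than the paper itself, whose proof simply substitutes primed quantities ``using the same reasoning as Algorithm \ref{main_algo}'' without ever spelling out the extension of $M$ to type-specific event columns or re-verifying that the $L_1$ sensitivity remains two; your mutual-exclusivity argument for competing events fills in exactly that omitted step.
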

\begin{proof}
Cumulative incidence extends Kaplan-Meier estimator and is given by
\begin{equation}
    \hat{I}_k(t) = \sum_{j:t_j < t} \hat{S}(t_j)\frac{d_{jk}}{n_j} 
\end{equation}
where $d_{jk}$ is the number of events of type $k$ at time $t_{(j)}$ and $\hat{S}(t_j)$ is the standard Kaplan-Meier estimator to time $t_{(j)}$.

Replacing associated quantities with their differentially private counterparts (using same reasoning as Algorithm \ref{main_algo}).
\begin{equation}
    \hat{I}_k(t)' = \sum_{j:t_j < t} \hat{S}(t_j)'\frac{d_{jk}'}{n_j'} 
\end{equation}
Its not hard to see that $\hat{I}_k(t)'$ is differentially private by the post-processing theorem.
\end{proof}

Further statistics associated with the cumulative incidence such as the confidence intervals and hypothesis tests, etc. that directly depend on the quantities made differentially private using Algorithm \ref{main_algo} can be similarly argued to be differentially private.

\subsection{Nelson-Aalen's Estimate of the Hazard Function}
Analogous to the KM estimator of the survival function, another important non-parametric estimator that is often used is the Nelson-Aalen estimator \cite{nelson1969hazard,aalen1978nonparametric} of the cumulative hazard. Nelson-Aalen estimator estimates the hazard at each time point and has a nice interpretation as the expected number of deaths in $(0, t_j]$ per unit at risk.  Although hazard function can be derived using it's relationship with the survival function ($\hat{A}_t = -\log(\hat{S}_t)$), for which we can directly argue differential privacy using our estimate of $\hat{S}'(t)$, there are certain scenarios where we explicitly need to use the Nelson-Aalen estimator or require it for deriving ``other" estimates, such as the Flemming-Harrington's estimate of the survival function, etc. Hence, below we prove that using Algorithm \ref{main_algo}, we can easily guarantee differential privacy of Nelson-Aalen's estimator.

\begin{theorem}
Following Algorithm \ref{main_algo}, Nelson-Aalen estimator of the hazard function is differentially private.
\end{theorem}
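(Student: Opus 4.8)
The plan is to mirror the post-processing template that already established differential privacy for the confidence intervals, the logrank statistic, and the cumulative incidence function. First I would recall the definition of the Nelson-Aalen estimator of the cumulative hazard,
\begin{equation}
    \hat{A}(t) = \sum_{t_j \le t} \frac{d_j}{r_j},
\end{equation}
and observe that the only sensitive quantities entering it are the event counts $d_j$ and the at-risk counts $r_j$ at each failure time $t_j$ --- exactly the quantities that Algorithm \ref{main_algo} already releases in noised form.

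Next I would substitute these by their differentially private counterparts $d_j'$ and $r_j'$ produced by Algorithm \ref{main_algo} to obtain
\begin{equation}
    \hat{A}'(t) = \sum_{t_j \le t} \frac{d_j'}{r_j'}.
\end{equation}
Since every term of this sum is a deterministic function of the entries of the noisy matrix $M'$ and nothing else from the raw data is consulted, $\hat{A}'(t)$ is the image of an $\epsilon$-differentially private release under a deterministic map. The post-processing theorem then immediately yields that $\hat{A}'(t)$ is $\epsilon$-differentially private, and by the same reasoning any downstream quantity built from it (for instance the Flemming-Harrington survival estimate) inherits the guarantee at no additional privacy cost.

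Because Algorithm \ref{main_algo} already exposes both $d_j'$ and $r_j'$, no fresh access to the sensitive data and no additional noise is required, so the argument collapses entirely to post-processing and there is no genuine analytic obstacle. The only matters that warrant care are bookkeeping: one must confirm that the $r_j'$ fed into the sum is precisely the recursively completed at-risk count from lines 4--6 of Algorithm \ref{main_algo}, so that the estimator truly consumes only $M'$, and that the summation ranges only over time points with $r_j' > 0$, consistent with the clipping convention discussed after Algorithm \ref{main_algo}, so that each term is well defined. I expect this consistency check, rather than any inequality, to be the main thing to get right. As an independent cross-check one may instead route through the identity $\hat{A}_t = -\log \hat{S}'_t$, which expresses the cumulative hazard as a deterministic post-processing of the already private $\hat{S}'(t)$ and reaches the same conclusion.
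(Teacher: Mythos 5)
Your proposal follows exactly the paper's own argument: substitute the noisy counts $d_j'$ and $r_j'$ from Algorithm \ref{main_algo} into the Nelson-Aalen formula and invoke the post-processing theorem, which is precisely what the paper does. Your additional bookkeeping remarks (the clipping convention, the recursively completed at-risk counts) and the alternative route via $\hat{A}_t = -\log \hat{S}'_t$ are sound refinements but do not change the substance of the argument.
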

\begin{proof}
Nelson-Aalen's estimator is given as
\begin{equation}
    \hat{A}_t = \sum_{t_j \le t} \frac{d_j}{r_j}
\end{equation}
replacing $d_j$ and $r_j$ with their noisy counterparts from Algorithm \ref{main_algo}
\begin{equation}
    \hat{A}_t' = \sum_{t_j \le t} \frac{d_j'}{r_j'}
\end{equation}
where $\hat{A}_t'$ is now differentially private by the post-processing property of differential privacy.
\end{proof}

Similar to the survival function, ``other" statistics associated with the Nelson-Aalen estimator can be argued to be differentially private following Algorithm \ref{main_algo}.

\section{Empirical Evaluation}
Here we present the empirical evaluation of our method on eleven real-life clinical datasets (nine for evaluating KM and two for competing risk) of varying properties. We start with the dataset description for our main comparison.
\subsection{Datasets}\label{sec:datasets}
Nine real-life clinical datasets with time to event information are used to evaluate our proposed method for the KM estimate\footnote{We use two additional datasets for competing risk evaluation in Section \ref{sec:cr}}. Dataset summary is provided in Table \ref{tab:datasets} followed by further dataset-specific details (dataset properties, pre-processing, group comparison details for hypothesis tests, etc.).
\begin{table}[h]
\caption{Datasets used for evaluation of our proposed method, observations are the number of observations (rows) in the dataset. Wide variety of datasets are used to simulate real-world clinical scenarios.}
\label{tab:datasets}
\centering
\begin{tabular}{ll}
\hline
Dataset  & Observations \\ \hline
Cancer   & 228          \\
Gehan    & 42           \\
Kidney   & 76           \\
Leukemia & 23           \\
Mgus     & 1384         \\
Myeloid  & 646          \\
Ovarian  & 26           \\
Stanford & 184          \\
Veteran  & 137         
\end{tabular}
\end{table}

\begin{enumerate}
    \item Cancer: It pertains to the data on survival in patients with advanced lung cancer from the North Central Cancer Treatment Group \cite{loprinzi1994prospective}. Survival time in days is converted into months. Groups compared are survival amongst males and females.
    \item Gehan: This is the dataset from a trial of 42 leukemia patients \cite{cox2018analysis}. Groups compared are the control and treatment groups. 
    \item Kidney: This dataset is for the recurrence times to infection, at the point of insertion of the catheter, for kidney patients using portable dialysis equipment \cite{mcgilchrist1991regression}. Time is converted into months and groups compared are males and females. 
    \item Leukemia: The dataset pertains to survival in patients with Acute Myelogenous Leukemia \cite{miller2011survival}. Time is converted into months and groups compared are the patients receiving maintenance chemotherapy vs no maintenance chemotherapy.
    \item Mgus: This dataset is about natural history of subjects with monoclonal gammopathy of undetermined significance (MGUS) \cite{kyle1993benign}. Time is converted into months and groups compared are males and females.
    \item Myeloid: Dataset is based on a trial in acute myeloid leukemia. Time is converted into months and groups compared are the two treatment arms.
    \item Ovarian: This dataset pertains to survival in a randomized trial comparing two treatments for ovarian cancer \cite{edmonson1979different}. Time is converted into months and groups compared are the different treatment groups.
    \item Stanford: This dataset is the Stanford Heart Transplant data \cite{escobar1992assessing}. Time is converted into months and groups compared are the age groups (above and below median).
    \item Veteran: This dataset has information from randomized trial of two treatment regimens for lung cancer \cite{kalbfleisch2011statistical}. Time is converted into months and groups compared are the treatment arms.
\end{enumerate}

\begin{figure*}[h!]
\centering
\begin{subfigure}{.3\textwidth}
  \includegraphics[width=.9\linewidth]{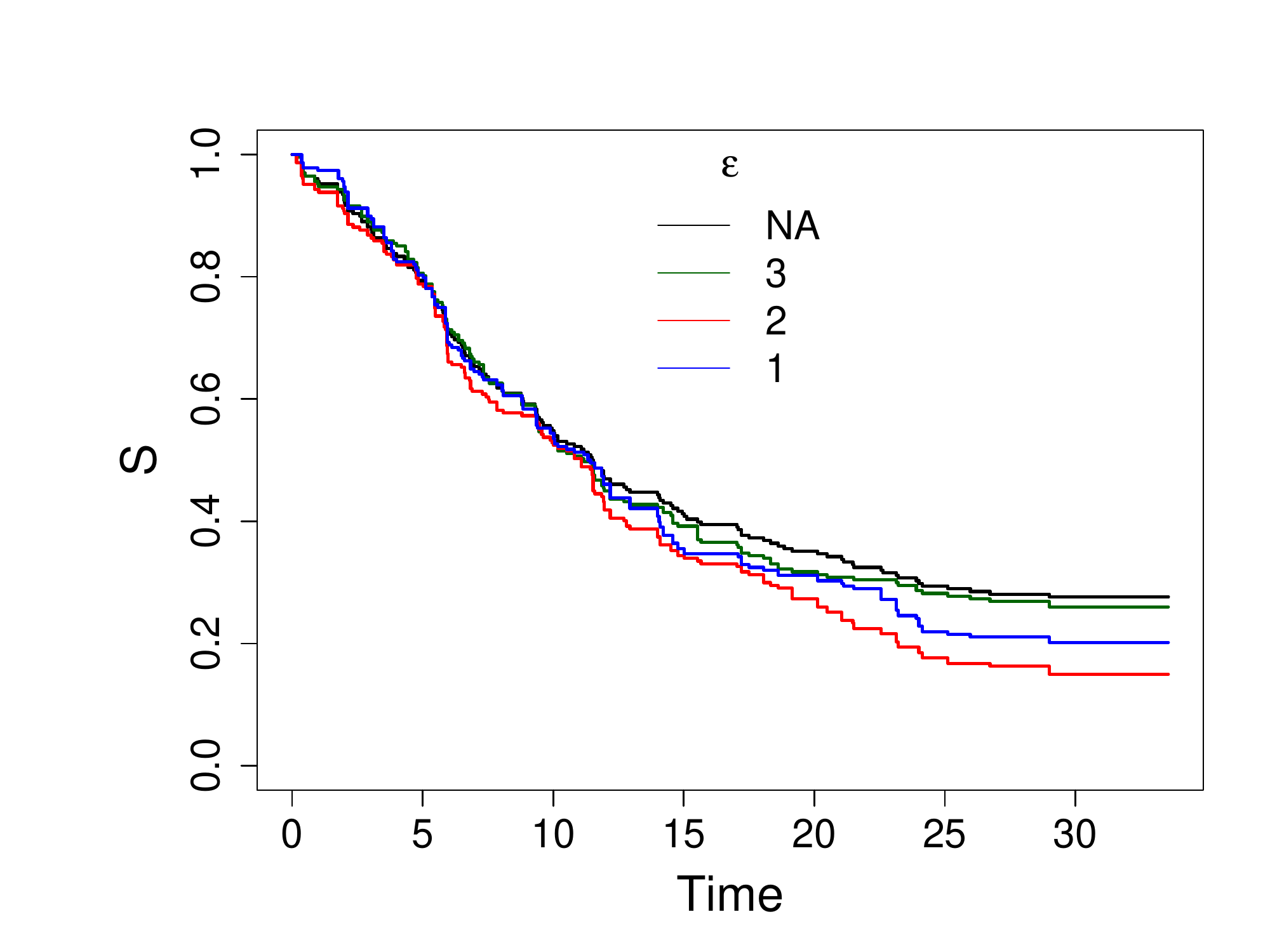}
  \caption{Cancer}
  \label{fig:sub1}
\end{subfigure}
\begin{subfigure}{.3\textwidth}
  \includegraphics[width=.9\linewidth]{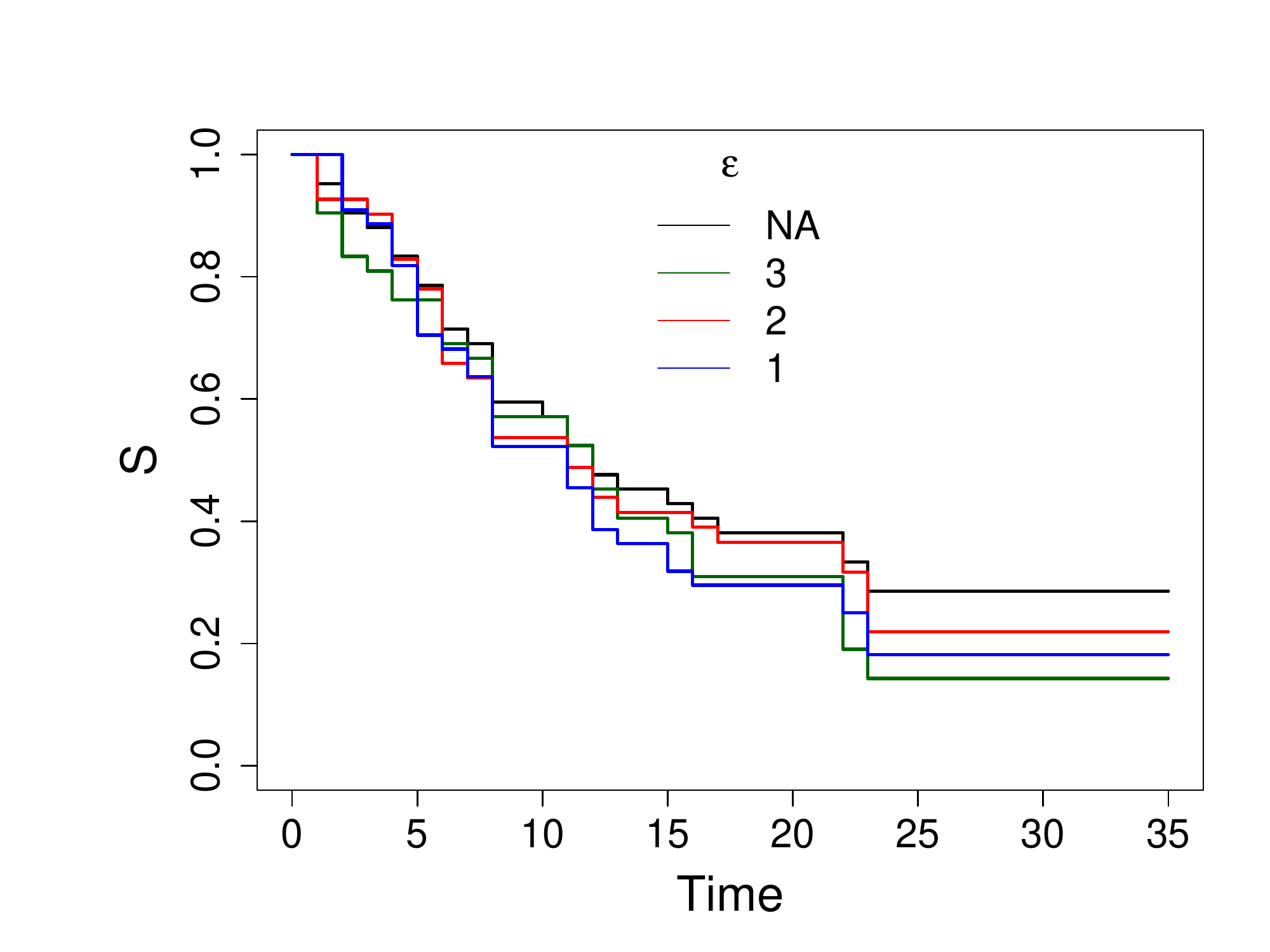}
  \caption{Gehan}
  \label{fig:sub2}
\end{subfigure}
\begin{subfigure}{.3\textwidth}
  \includegraphics[width=.9\linewidth]{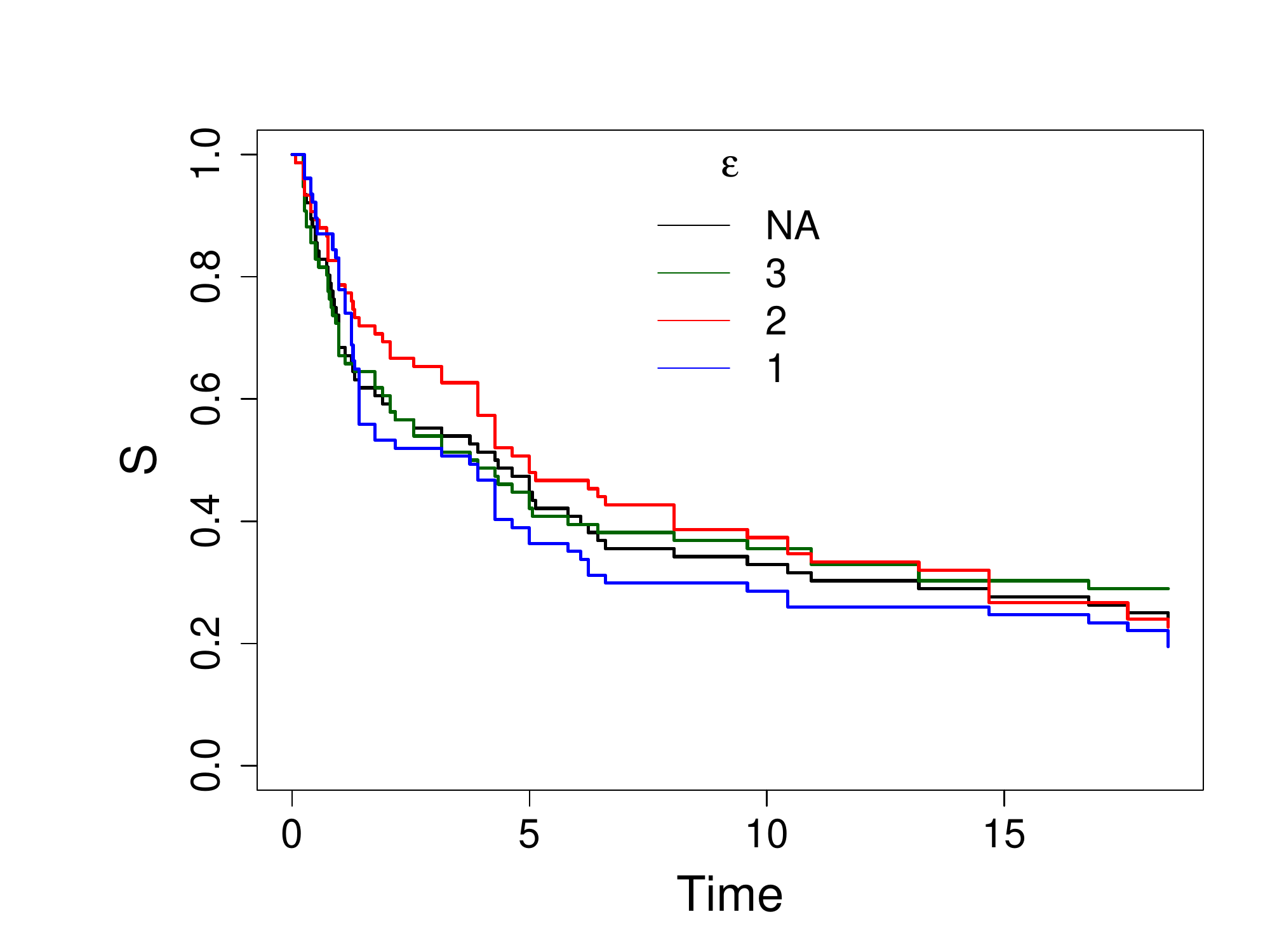}
  \caption{Kidney}
  \label{fig:sub2}
\end{subfigure}

\begin{subfigure}{.3\textwidth}
  \includegraphics[width=.9\linewidth]{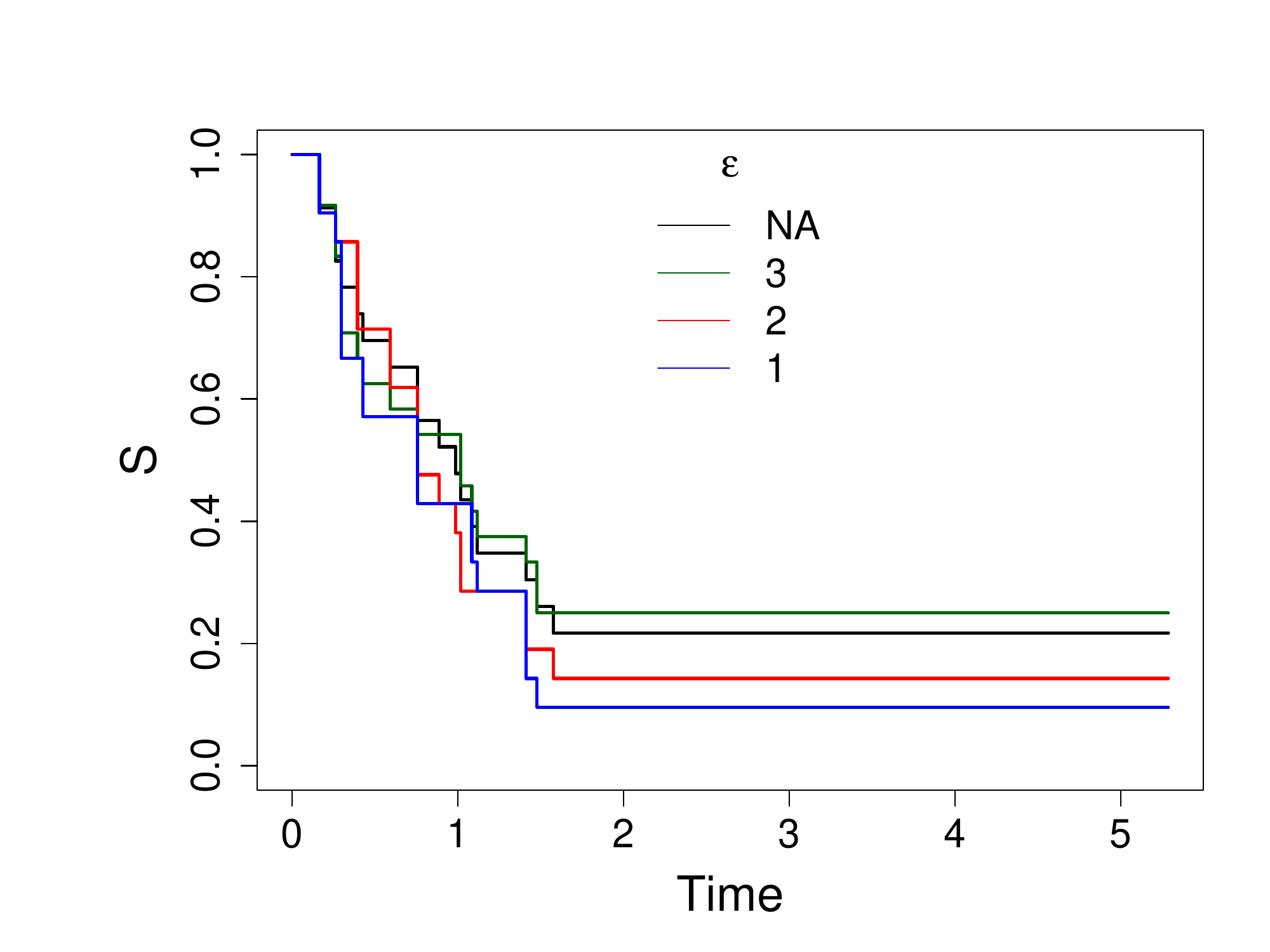}
  \caption{Leukemia}
  \label{fig:sub2}
\end{subfigure}
\begin{subfigure}{.3\textwidth}
  \includegraphics[width=.9\linewidth]{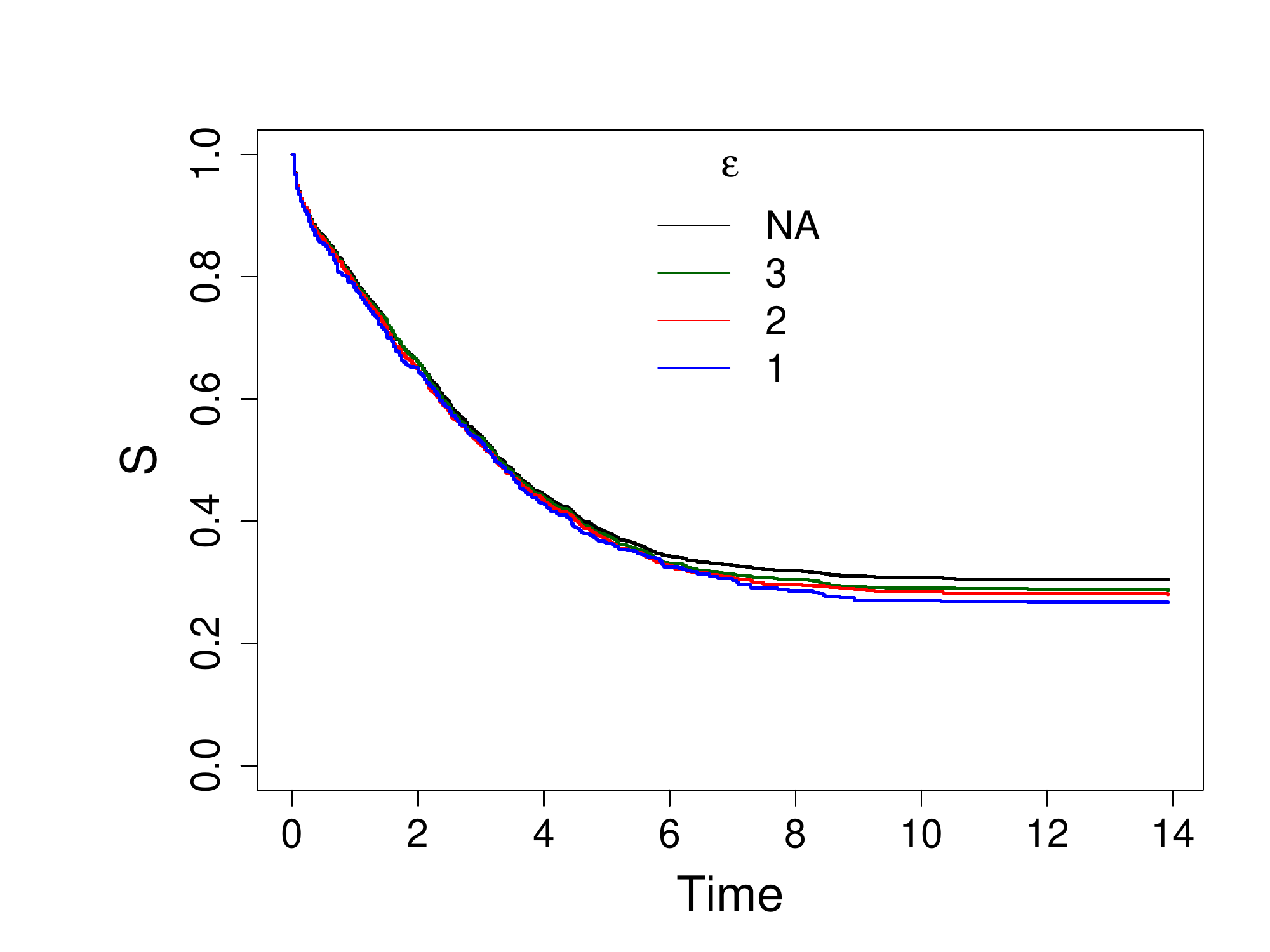}
  \caption{Mgus}
  \label{fig:sub2}
\end{subfigure}
\begin{subfigure}{.3\textwidth}
  \includegraphics[width=.9\linewidth]{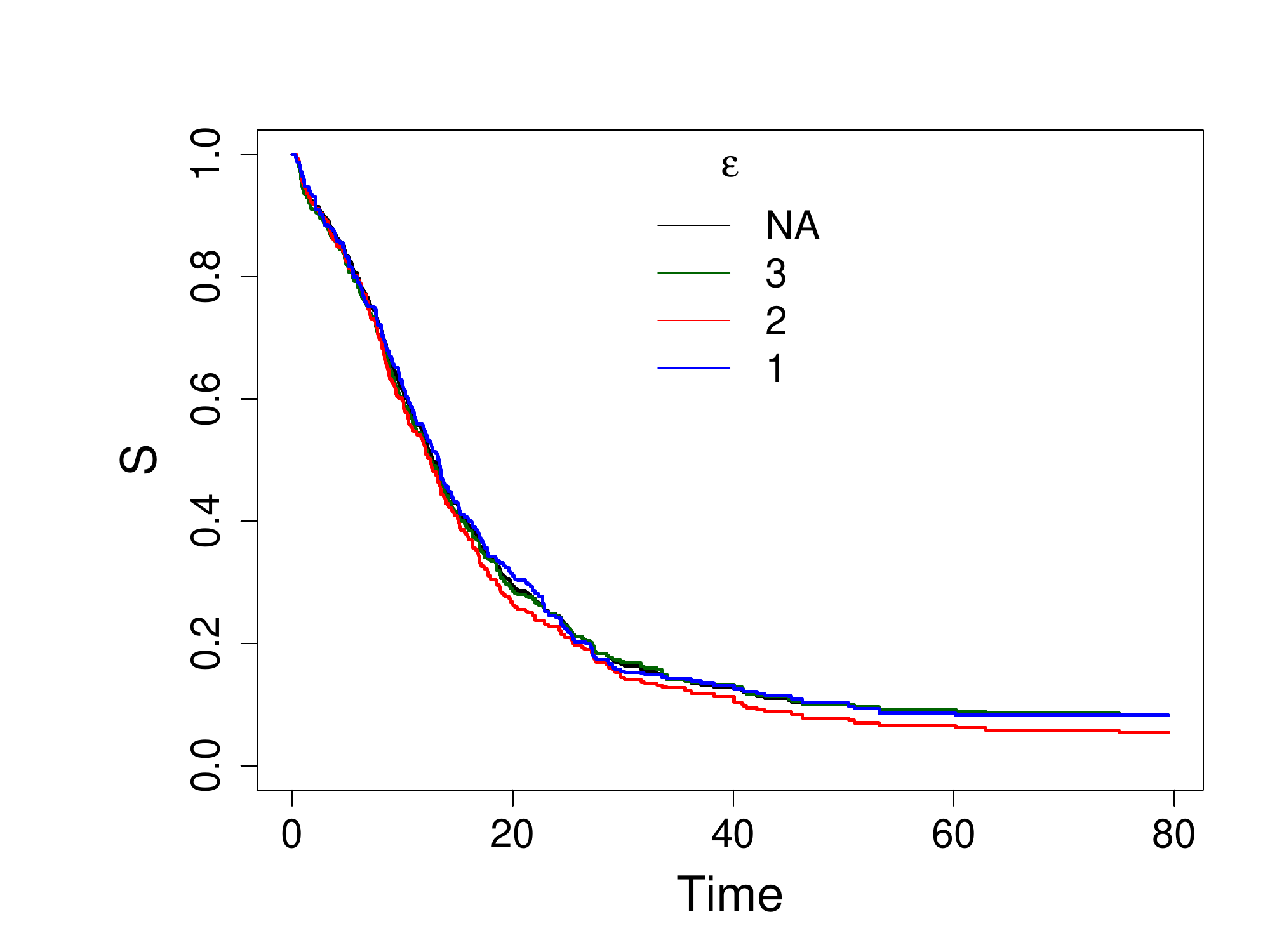}
  \caption{Myeloid}
  \label{fig:sub2}
\end{subfigure}

\begin{subfigure}{.3\textwidth}
  \includegraphics[width=.9\linewidth]{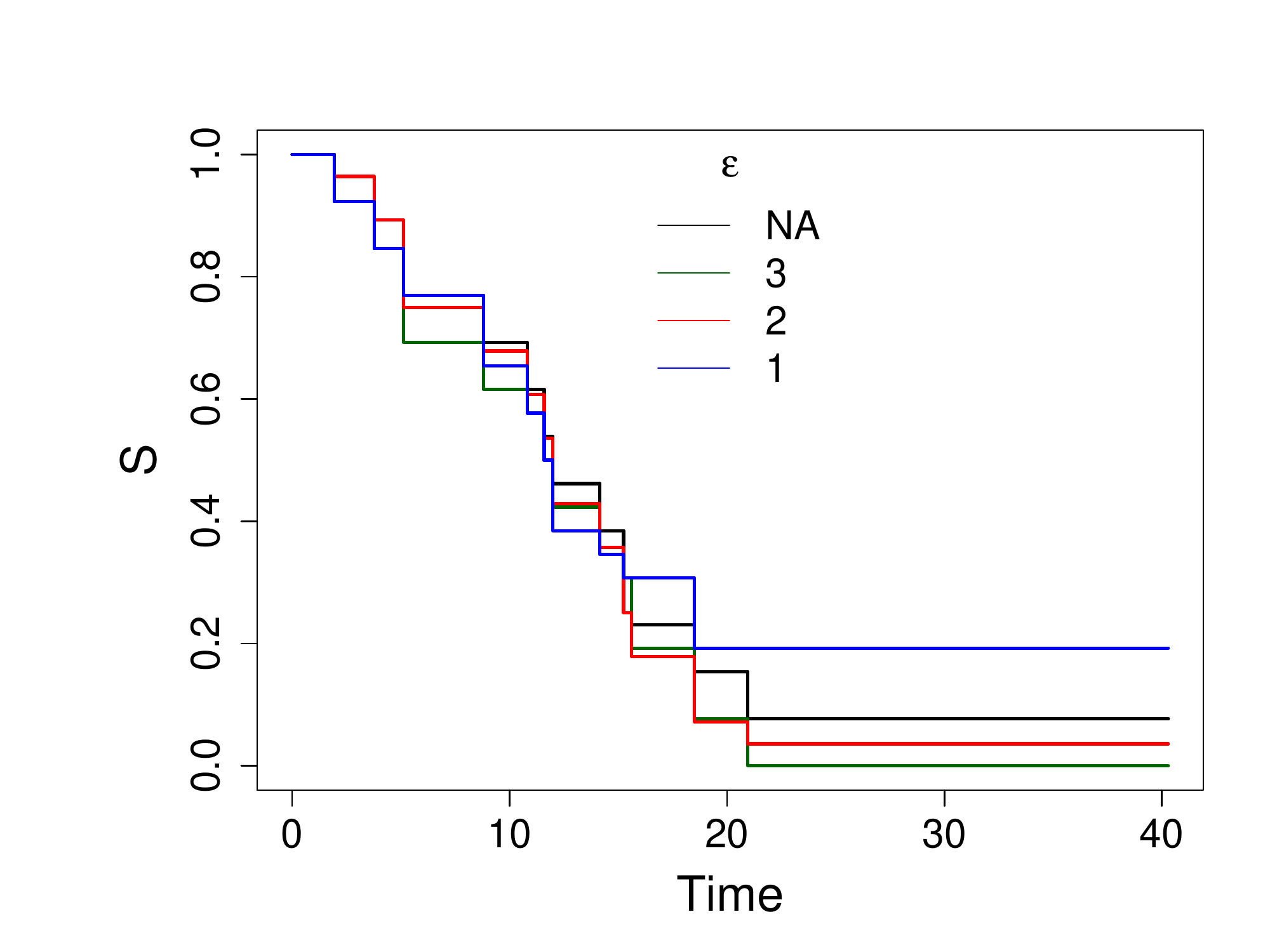}
  \caption{Ovarian}
  \label{fig:sub2}
\end{subfigure}
\begin{subfigure}{.3\textwidth}
  \includegraphics[width=.9\linewidth]{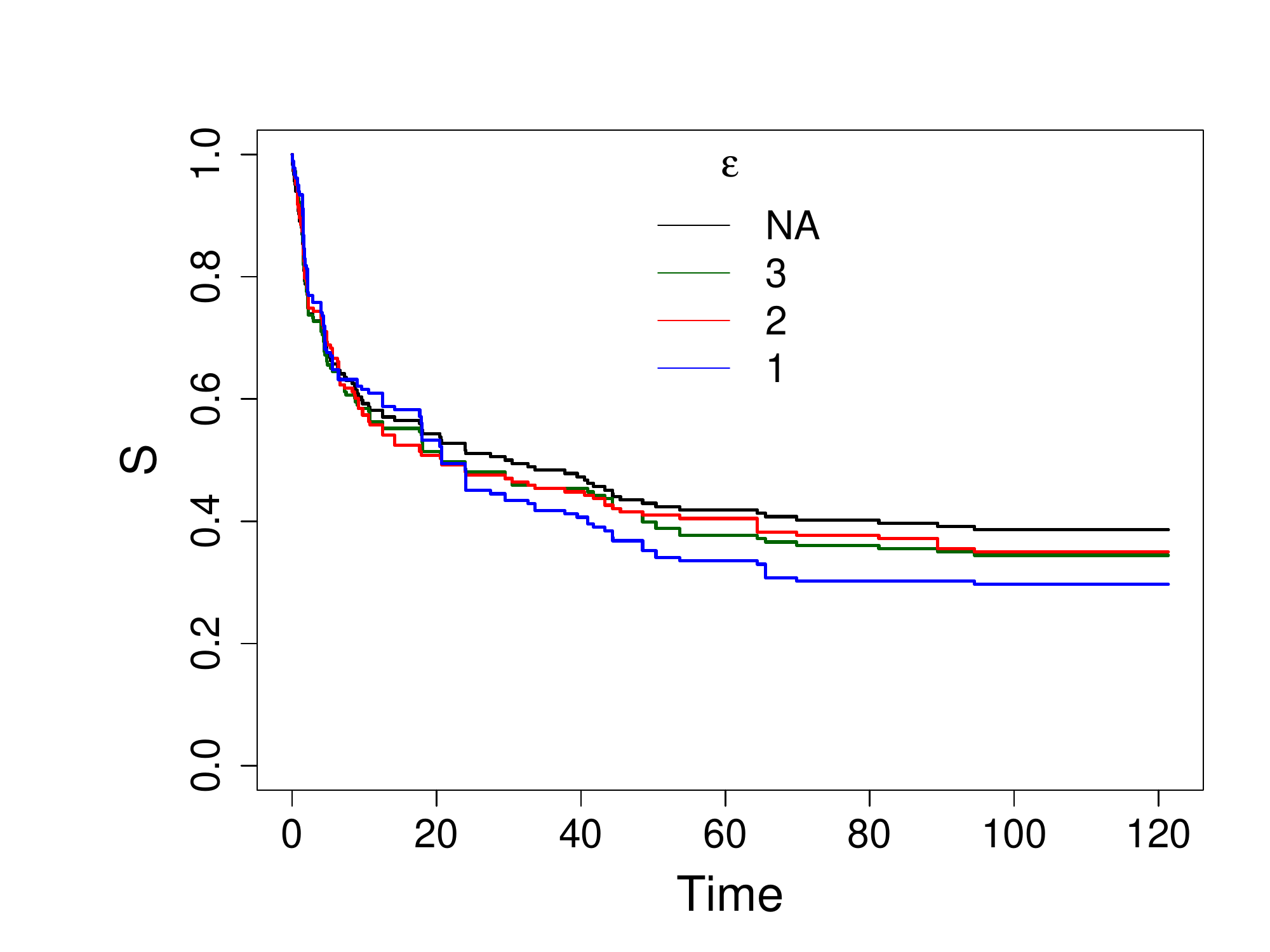}
  \caption{Stanford}
  \label{fig:sub2}
\end{subfigure}
\begin{subfigure}{.3\textwidth}
  \includegraphics[width=.9\linewidth]{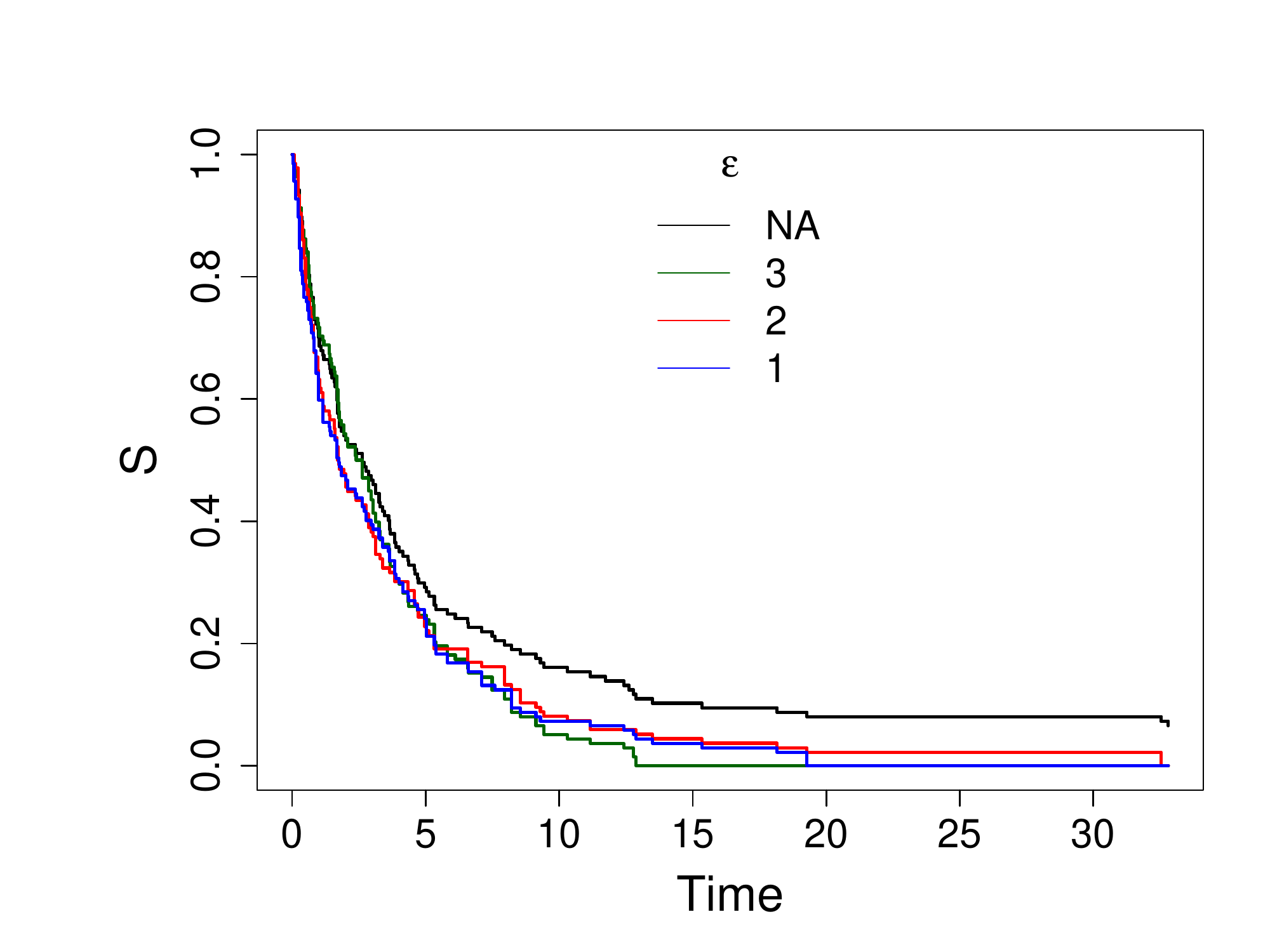}
  \caption{Veteran}
  \label{fig:sub2}
\end{subfigure}
\caption{Differentially private estimation of the survival function: Followup time is on the X-axis and the probability of survival is on the Y-axis. The black line is the original function estimate, the green line is the differentially private estimate with $\epsilon=3$, the orange line is the differentially private estimate with $\epsilon=2$, and the blue line is the differentially private estimate with $\epsilon=1$. We observe that our method provides good utility while protecting an individual's privacy. Small sample sized datasets fare worse compared to larger datasets.}
\label{fig:main}
\end{figure*}

\subsection{Setup and Comparison}
To ensure thorough evaluation of our proposed method, we use varying settings for the privacy budget $\epsilon$ ($\epsilon \in [3,2,1]$). Being a ``non-trainable" model, there are no train/test splits and results are reported on the complete dataset as an average of ten runs. All experiments are performed in R \cite{Rcore} with the source code and the datasets made publicly available on GitHub and as an R package\footnote{Link removed to respect double blind review process.}.

As there is no current method for producing differentially private estimates of the survival function. We compare our approach to the original, gold-standard of the ``non-private" estimation. This provides us with a comparison to the upper bound (we cannot get better than the non-noisy version). Good utility in comparison with the original non-perturbed version provides credibility to our claim of high utility and will encourage practitioners to adopt our method for practical use.

\subsection{Main Results}
Now we present the outcome of our evaluation of differentially private KM estimation on nine real-life datasets. We start with the estimation of the differentially private survival function and then move on to the evaluation of the extensions (confidence intervals, test statistic, etc.).

\begin{table*}[t]
\caption{Median Survival with associated confidence intervals. $\epsilon$ is the privacy budget for our method and ``No Privacy" are the results from the non-noisy model. Our method provides ``close" estimates to the original non-noisy values.}
\label{tab:median_surv}
\centering
\begin{tabular}{llllll}
\cline{1-5} 
     & \multicolumn{3}{c}{Median Survival(95\% CI)} &  \\ \cline{1-5} 
Dataset & $\epsilon=3$    & $\epsilon=2$ & $\epsilon=1$   & \begin{tabular}{@{}c@{}}No \\ Privacy\end{tabular}   &   \\ \cline{1-5}
Cancer    & 11.5 (9.5,\ 14.1)       & 11.4 (9.4,\ 12.2)          & 11.1 (9.4,\ 11.9)   &  11.5 (9.5,\ 14.1)                       \\
Gehan    & 12.0 (7.0,\ 16.0)        & 11.0 (7.0,\ 18.0)         & 11.0 (7.0,\ 22.0)  & 12.0 (8.0,\ 14.1)                      \\
Kidney    & 3.9 (1.9,\ 6.6)           & 4.9 (3.9,\ 8.0)          & 4.8 (3.5,\ 8.4)    & 4.3 (1.4,\ 6.2)                        \\
Leukemia    & 1.0 (0.3,\ 1.5)           & 0.8 (0.2,\ 1.0)          & 0.7 (0.1,\ 1.1)    & 0.9 (0.4,\ 1.4)                     \\
Mgus    & 3.3 (3.1,\ 3.5)           & 3.3 (3.0,\ 3.5)          & 3.2 (3.0,\ 3.5)    & 3.3 (3.1,\ 3.6)                       \\
Myeloid    & 12.6 (11.9,\ 13.5)        & 12.5 (11.7,\ 13.4)       & 13.4 (12.2,\ 13.8)   & 12.7 (11.9,\ 13.8)                           \\
Ovarian    & 12.0 (8.9,\ 15.6)        & 11.8 (8.8,\ 15.2)          & 11.6 (5.1,\ 15.2)   & 11.9 (8.8,\ 15.2)                      \\
Stanford    & 26.3 (15.8,\ 49.3)         & 24.7 (12.8,\ 47.3)         & 21.4 (18.7,\ 33.6) & 30.5(12.5,\ 48.5)                     \\
Veteran    & 2.6 (1.7,\ 3.0)           & 1.7 (1.2,\ 2.8)          & 1.7 (1.1,\ 2.7)    & 2.6 (1.7,\ 3.4)                        
\end{tabular}
\end{table*}

\subsubsection{Estimating Survival Function}
For the differentially private estimation of the survival function (our primary goal), Figure \ref{fig:main} shows the results. We can see that our privacy-preserving estimation (green line) faithfully estimates the survival function (black line), with little to no utility loss. As expected, estimation deteriorates with decreased privacy budget ($\epsilon \in [2,1]$, orange and blue lines respectively). This is intuitive as when the privacy budget decreases, the noise scale required to preserve differential privacy increases, leading to \emph{noisier} estimates.

An observation worth making is that as the dataset size gets smaller (such as ovarian, Leukemia, etc.), the utility of our differentially private estimation gets worse. Which is because from the differential privacy point of view, to protect an individual's privacy in a small dataset, we need to add large noise (large perturbation). Whereas for moderate to medium-sized datasets, our differentially private estimation provides good results, even for the high privacy regime. When tested for statistical differences, we found that all privacy preserving estimates (with $\epsilon \in [3,2,1]$) were not statistically-significantly different from the original, non-noisy estimate\footnote{Using the logrank test with statistical significance set at 0.05 level}.

\begin{table}[H]
\caption{The test statistic for comparing two survival distributions. $\epsilon$ is the privacy budget for our method and ``No Privacy" are the results from the non-noisy model. Our method provides good utility with strong privacy guarantees.}
\label{tab:test_stat}
\centering
\begin{tabular}{lllll}
\cline{1-5} 
 & \multicolumn{2}{c}{Test Statistic ($Z$)} \\ \cline{1-5} 
Dataset & $\epsilon=3$   & $\epsilon=2$ & $\epsilon=1$   & \begin{tabular}{@{}c@{}}No \\ Privacy\end{tabular}   \\ \cline{1-5} 
Cancer   & 11.1   & 11.8  & 11.9        & 10.3                        \\
Gehan    & 17.5  & 17.6  & 28.1        & 16.3                        \\
Kidney   & 7.4    & 8.4  & 21.9        & 6.9                        \\
Leukemia & 2.9 & 2.4   & 2.5         & 3.4                        \\
Mgus     & 7.2  & 6.9   & 5.9         & 9.7                        \\
Myeloid  & 9.2     & 9.1  & 10.7        & 9.6                        \\
Ovarian  & 0.8    & 1.2   & 2.4         & 1.1                        \\
Stanford & 6.2    & 6.9   & 8.0         & 6.6                        \\
Veteran  & 0.2    & 0.3   & 1.1         & 0.02                       
\end{tabular}
\end{table}

\subsubsection{Median Survival and Associated Confidence Intervals}
An important estimate often reported with survival function is the median survival time and its associated confidence intervals. Median survival time is defined as the time point when the survival function attains the value of $0.5$, confidence intervals for the survival function at that time point serve as the confidence intervals of the median survival. Table \ref{tab:median_surv} shows the results. For ``Median Survival (95\% CI)", we see that our method estimates the median with high precision, even for the high privacy regime. For the performance of our method, we see a similar trend as we saw with results in Figure \ref{fig:main}, where our precision increases with increasing dataset size, an acceptable trade-off for individual-level privacy protection.

\subsubsection{Test Statistic}
For the test statistic (obtained from comparing the survival distribution of different groups in the dataset, group details provided in the Section \ref{sec:datasets}), in Table \ref{tab:test_stat}, we observe that our differentially private estimation performs at par with the original ``non-noisy" estimation, even for the high privacy regime ($\epsilon \in [2,1]$). The test statistic ($Z$) follows the $\chi^2$ distribution with one degree of freedom. Using it to derive the p-values, we observe that none of the differentially private estimates change statistical significance threshold (at 0.05 level). That is, none of the differentially private estimates make the ``non-noisy" statistically significant results non-significant or vice-versa.

\subsection{Competing Risk Cumulative Incidence}\label{sec:cr}
For empirical evaluation in a competing risk scenario, we use two datasets that have more than one type of event. First is from a clinical trial for primary biliary cirrhosis (PBC) of the liver \cite{therneau2013modeling}. With the event variable being receipt of a liver transplant, censor, or death; our event of interest is the transplant, and death here is a competing event. The second dataset has the data on the subjects on a liver transplant waiting list from 1990-1999, and their disposition: received a transplant (event of interest), died while waiting (competing risk), or censored \cite{kim2006deaths}.

\begin{figure*}[h]
\centering
\begin{subfigure}{.3\textwidth}
  \includegraphics[width=.9\linewidth]{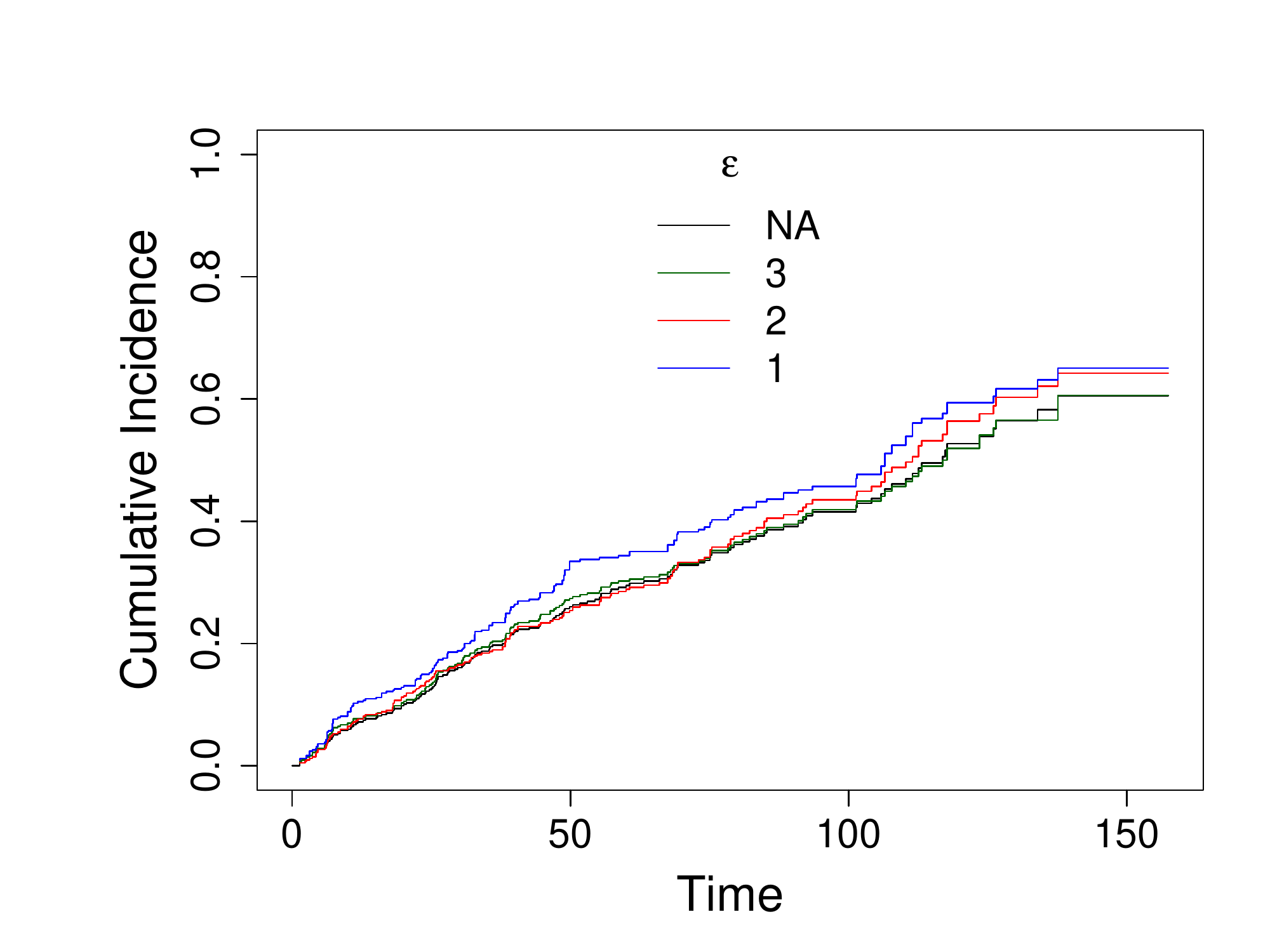}
  \caption{PBC}
  \label{fig:sub1}
\end{subfigure}
\begin{subfigure}{.3\textwidth}
  \includegraphics[width=.9\linewidth]{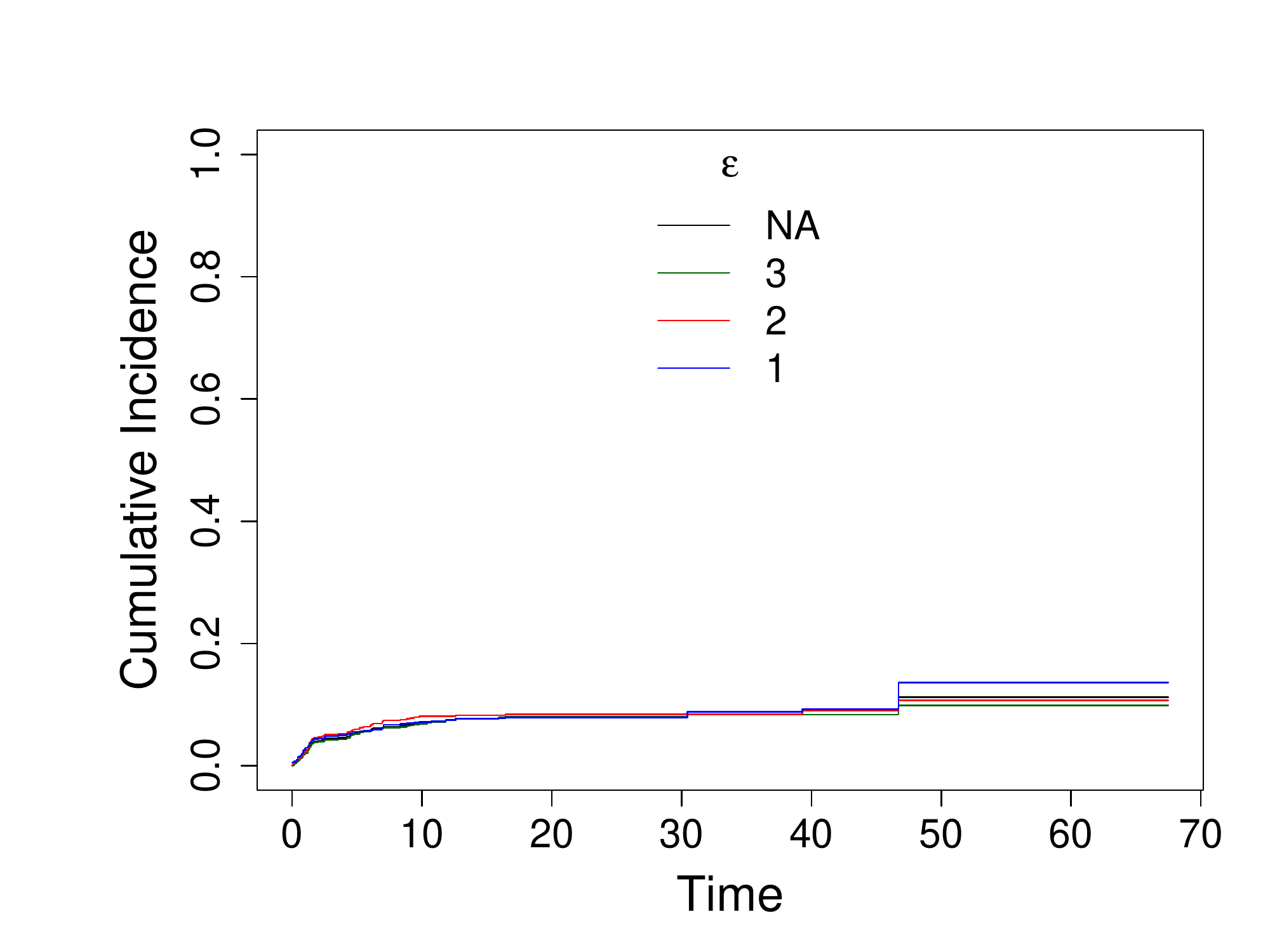}
  \caption{Transplant}
  \label{fig:sub2}
\end{subfigure}
\caption{Extending differentially private estimation to competing risk cumulative incidence (cumulative incidence is the opposite of survival function, so the plots go upward). Black is the original, unperturbed estimate. Green is with $\epsilon=3$, orange is with $\epsilon=2$, and blue is with $\epsilon=1$. We can see that our method does a good job of estimating competing risk cumulative incidence while providing strong privacy guarantees.}
\label{fig:cuminc}
\end{figure*}

\begin{figure*}[h!]
\centering
\begin{subfigure}{.3\textwidth}
  \includegraphics[width=.9\linewidth]{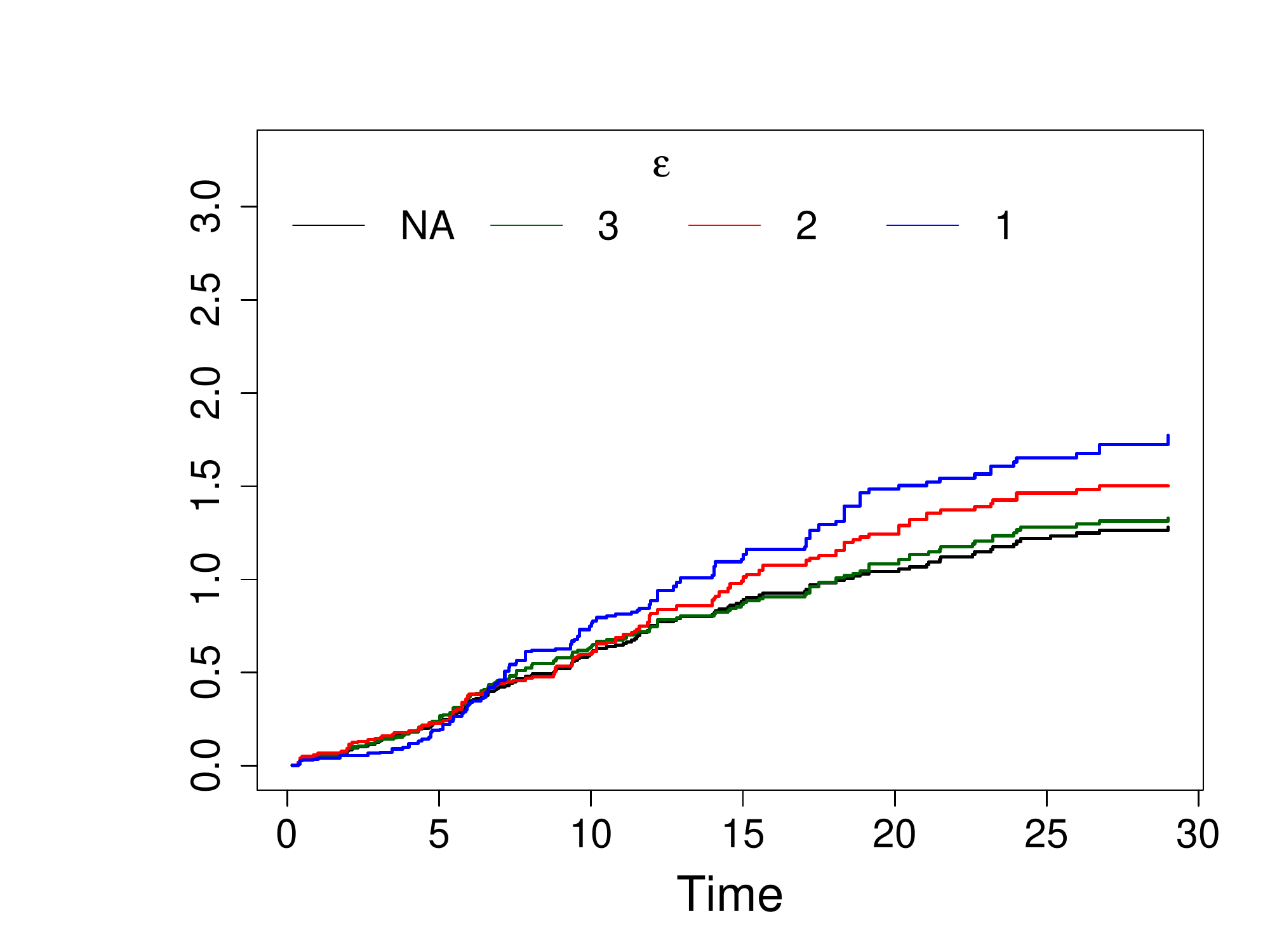}
  \caption{Cancer}
  \label{fig:sub1}
\end{subfigure}
\begin{subfigure}{.3\textwidth}
  \includegraphics[width=.9\linewidth]{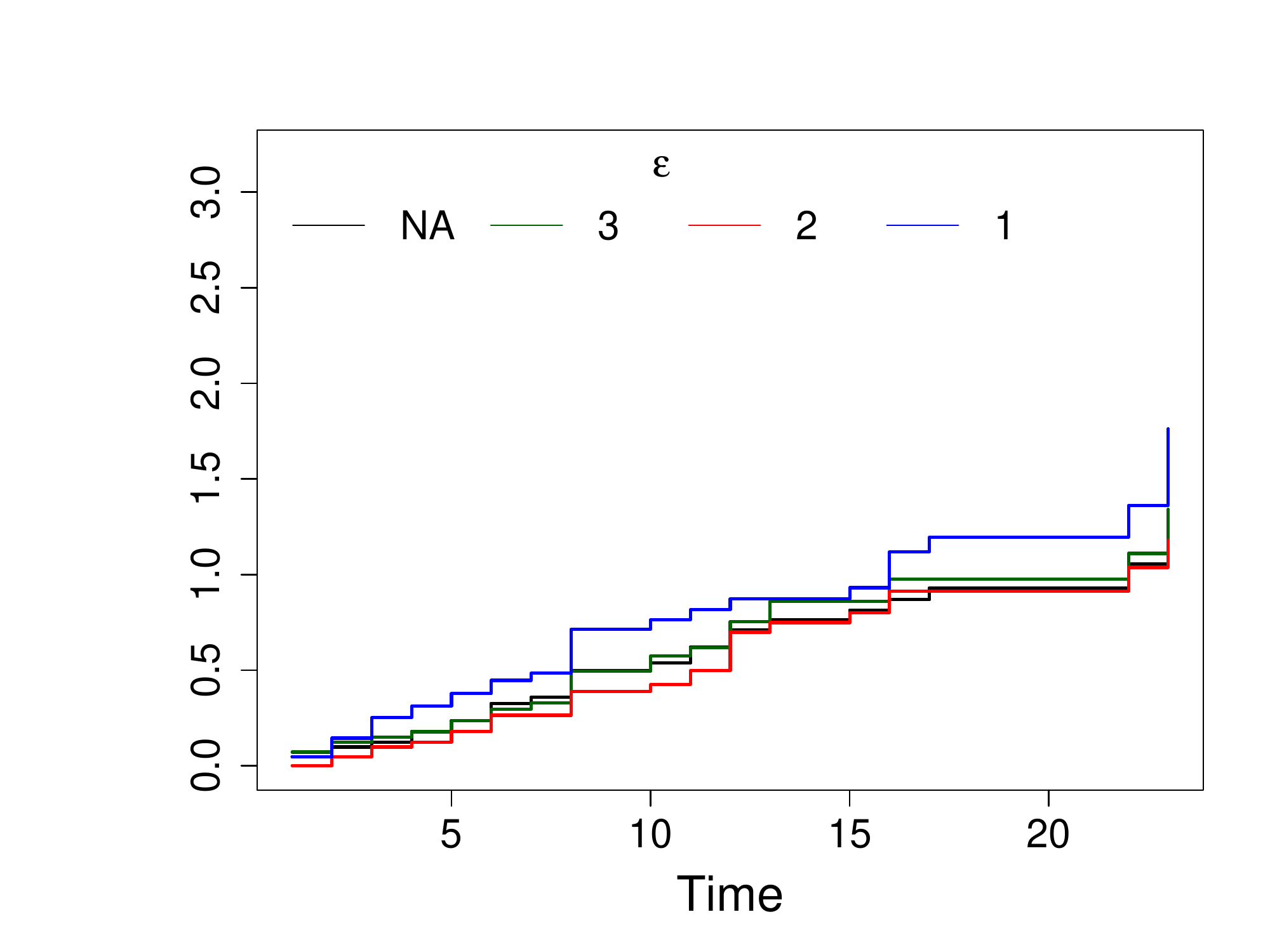}
  \caption{Gehan}
  \label{fig:sub2}
\end{subfigure}
\begin{subfigure}{.3\textwidth}
  \includegraphics[width=.9\linewidth]{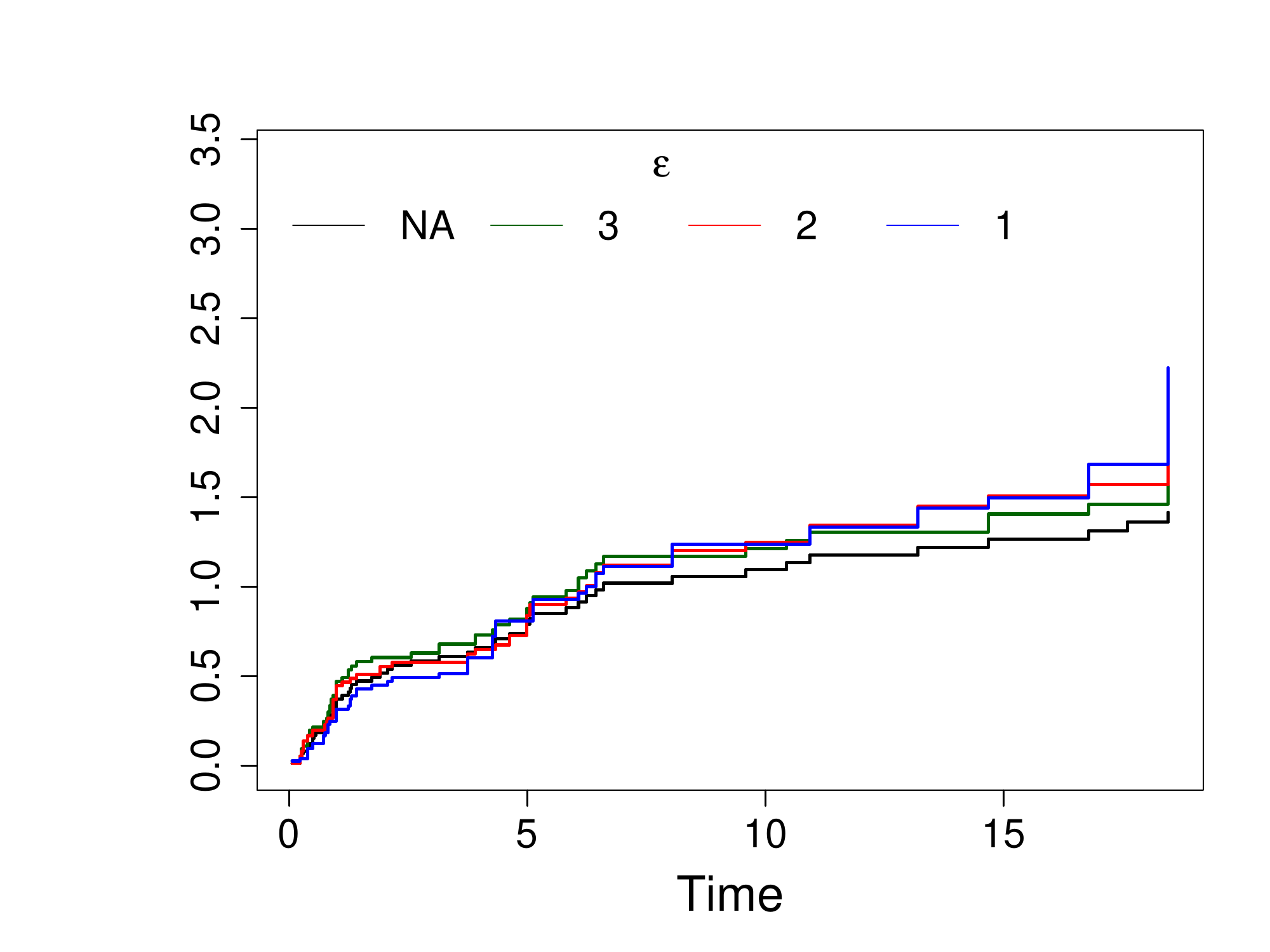}
  \caption{Kidney}
  \label{fig:sub2}
\end{subfigure}

\begin{subfigure}{.3\textwidth}
  \includegraphics[width=.9\linewidth]{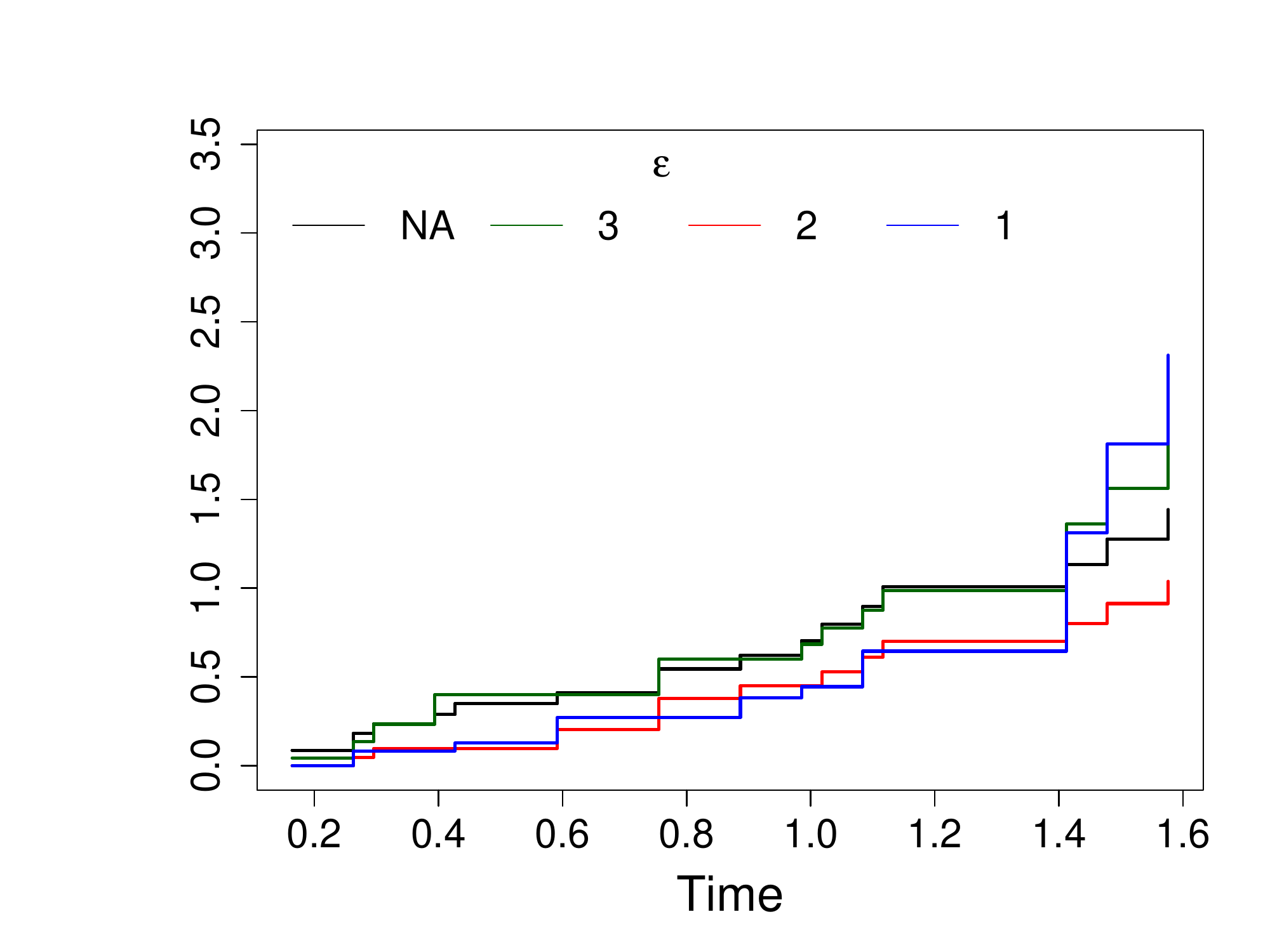}
  \caption{Leukemia}
  \label{fig:sub2}
\end{subfigure}
\begin{subfigure}{.3\textwidth}
  \includegraphics[width=.9\linewidth]{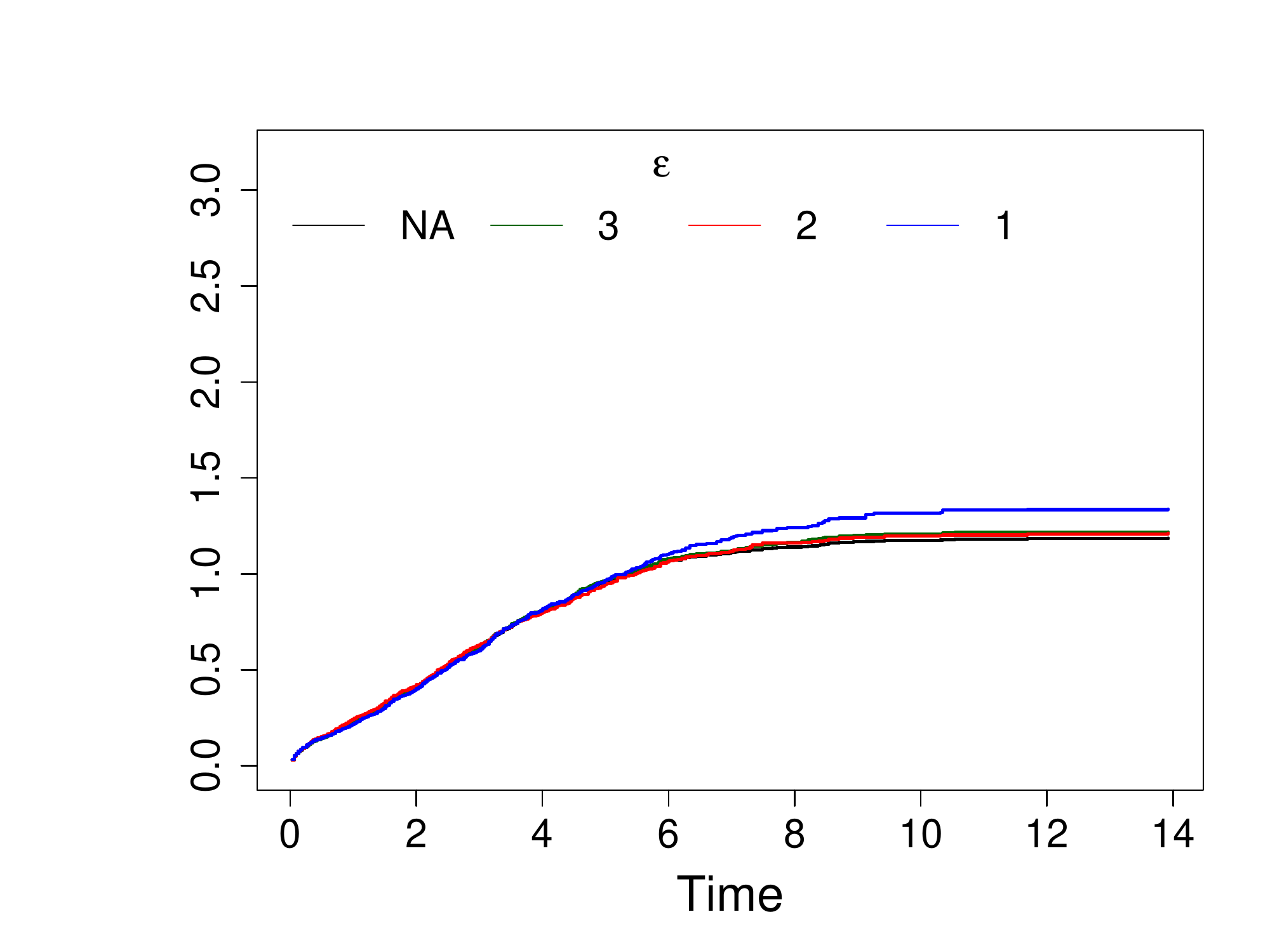}
  \caption{Mgus}
  \label{fig:sub2}
\end{subfigure}
\begin{subfigure}{.3\textwidth}
  \includegraphics[width=.9\linewidth]{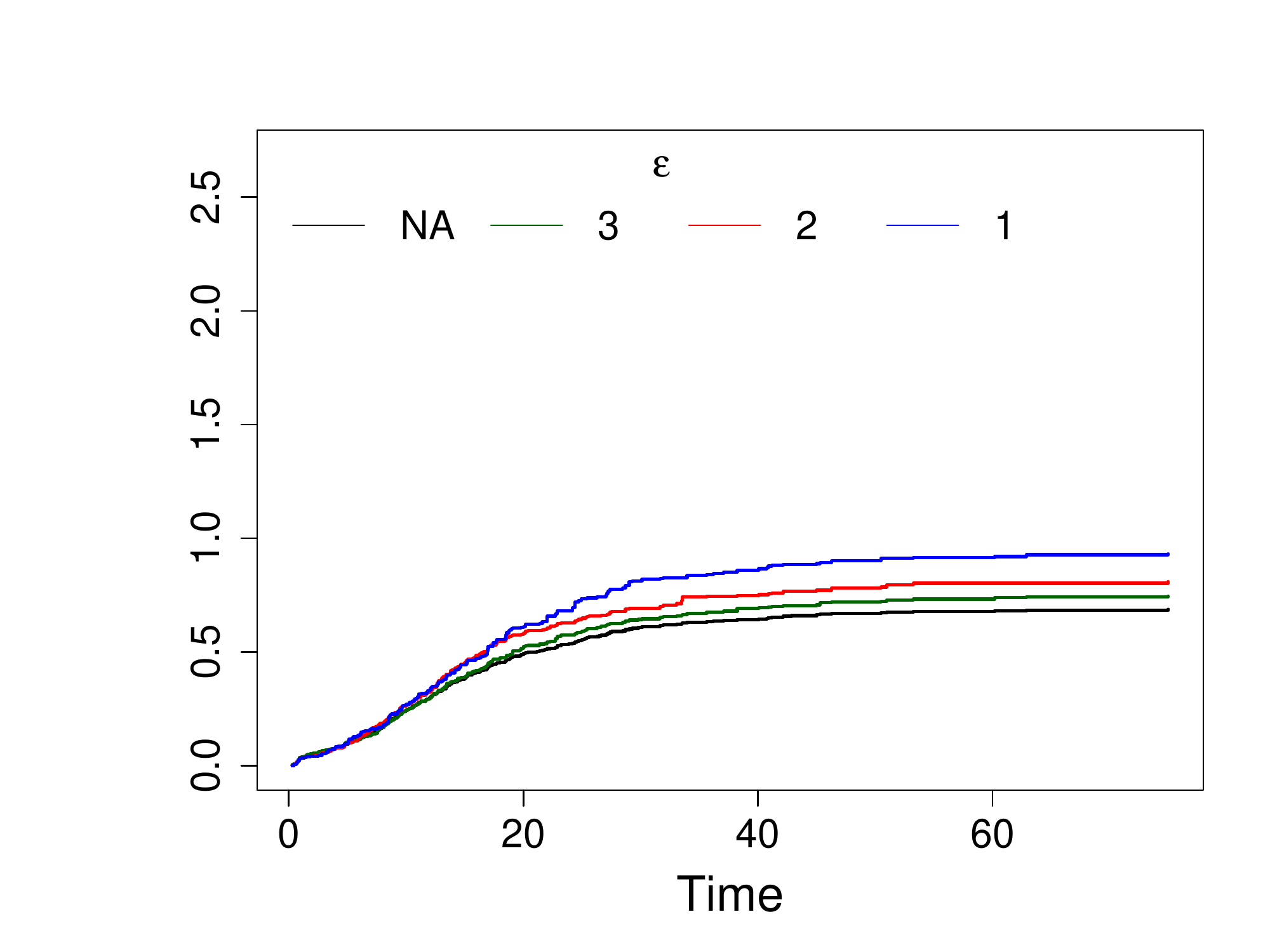}
  \caption{Myeloid}
  \label{fig:sub2}
\end{subfigure}

\begin{subfigure}{.3\textwidth}
  \includegraphics[width=.9\linewidth]{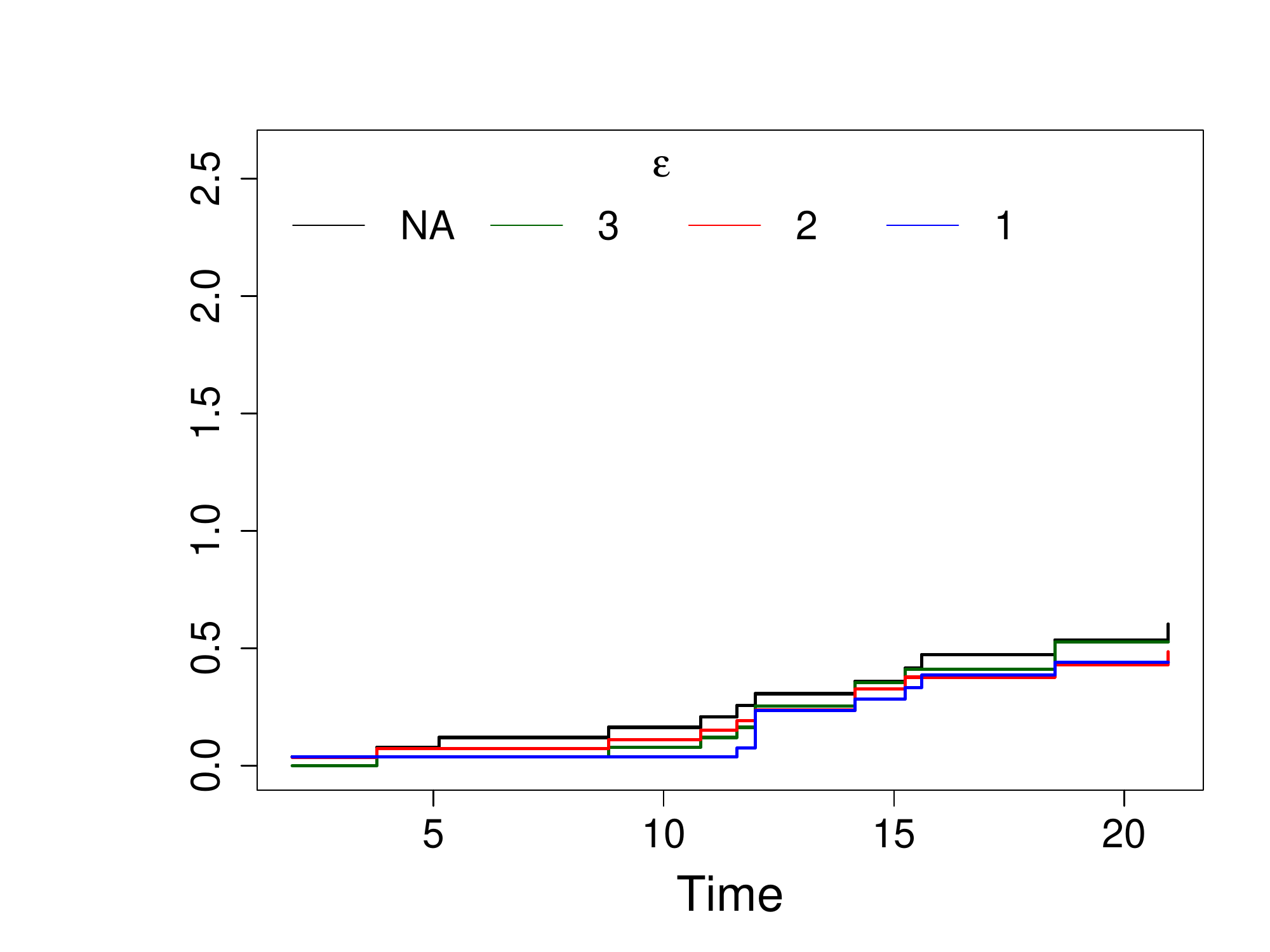}
  \caption{Ovarian}
  \label{fig:sub2}
\end{subfigure}
\begin{subfigure}{.3\textwidth}
  \includegraphics[width=.9\linewidth]{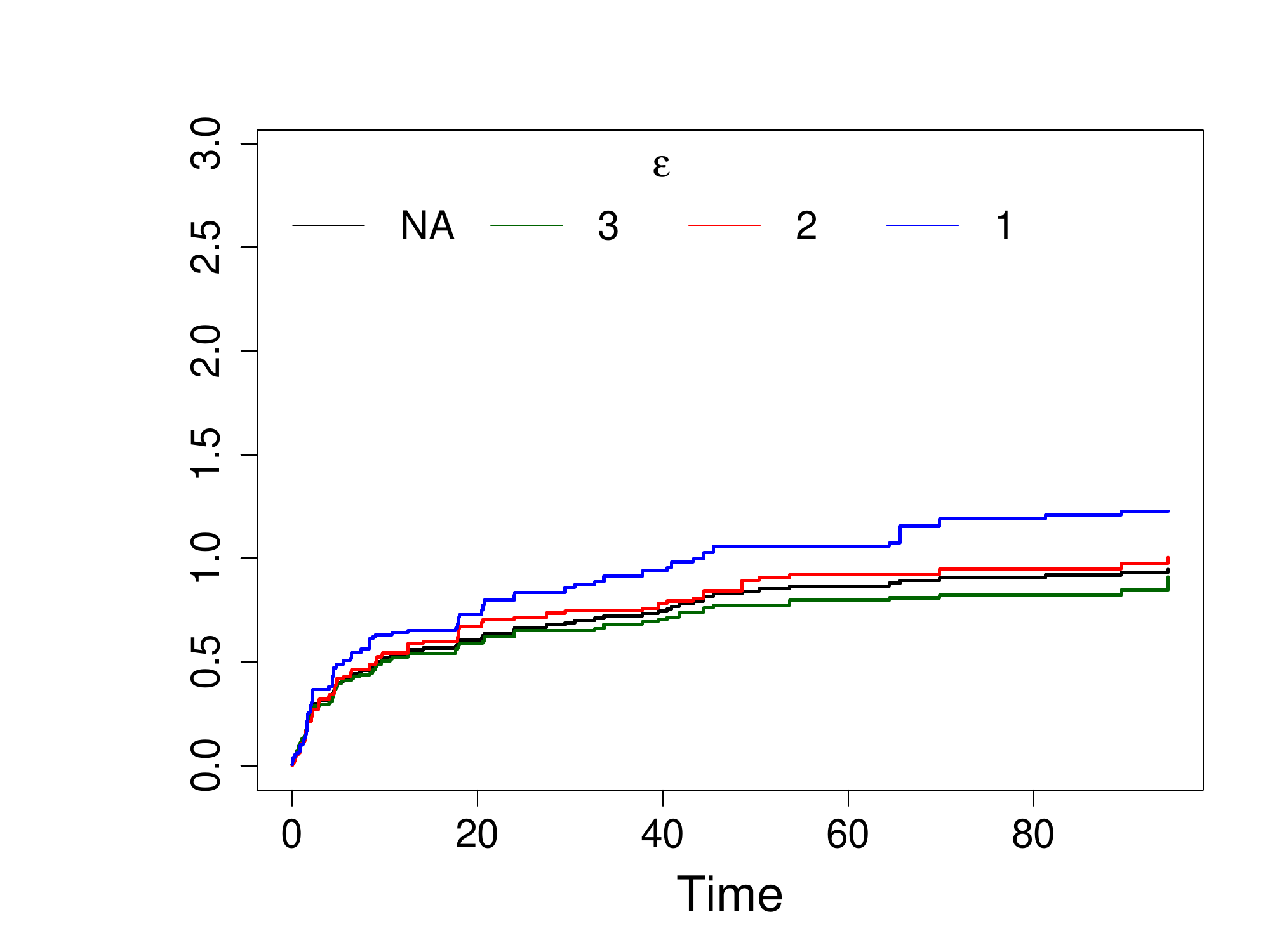}
  \caption{Stanford}
  \label{fig:sub2}
\end{subfigure}
\begin{subfigure}{.3\textwidth}
  \includegraphics[width=.9\linewidth]{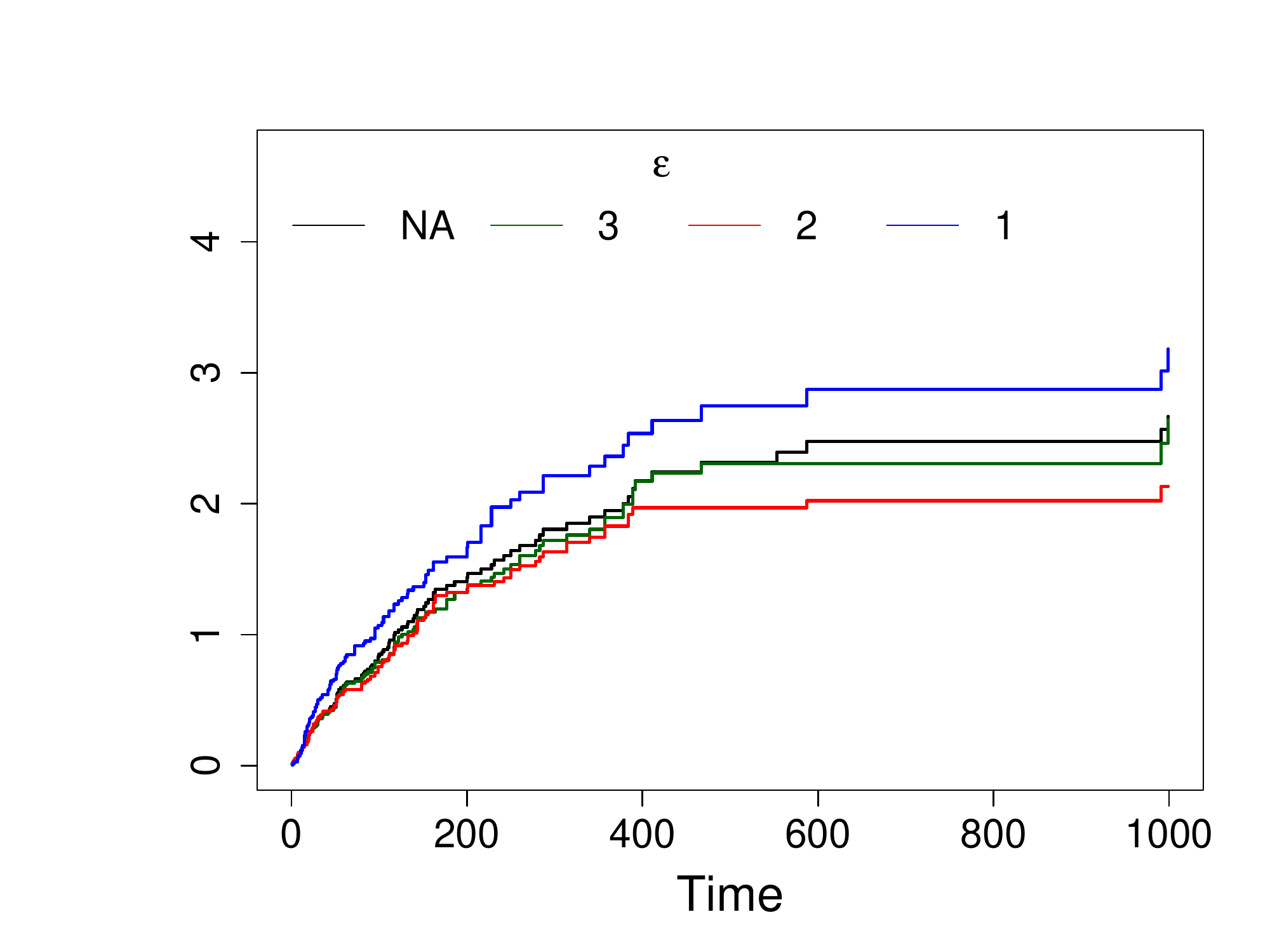}
  \caption{Veteran}
  \label{fig:sub2}
\end{subfigure}
\caption{Differentially private estimation of the Nelson-Aalen estimator, followup time is on the X-axis and the hazard estimate is on the Y-axis. The black line is the original function estimate, the green line is the differentially private estimate with $\epsilon=3$, the orange line is the differentially private estimate with $\epsilon=2$, and the blue line is the differentially private estimate with $\epsilon=1$. We observe that our differentially private version provides good utility while protecting an individual's privacy.}
\label{fig:na_main}
\end{figure*}

Figure \ref{fig:cuminc} shows the results (cumulative incidence is the opposite of survival function, so the plots go upward). We observe that our differentially private extension does an excellent job of differentially private estimation of the competing risk cumulative incidence function while providing strong privacy guarantees.

\subsection{Nelson-Aalen Estimate}
For evaluating the performance of our proposed differentially private Nelson-Aalen's estimator of the hazard function, we use the main nine datasets. Please note that similar to the competing risk cumulative incidence, being a ``risk" estimate, the value of the cumulative hazard estimate increases over time, hence it has an ``upward" curve compared to the ``downward" curve for the survival estimate.

Figure \ref{fig:na_main} shows the results for all nine datasets. Our differentially private estimate performs extremely well, similar to our main comparison, where we can see that our estimation provides good utility, even at high privacy regimes. Also, similar to our main comparison, all differentially private estimates are not statistically-significantly different from the original, non-noisy estimates.

\section{Related Work}
Much work has been done in the intersection of statistical modeling and differential privacy, including many works proposing different differentially private methods for regression modeling \cite{sheffet2017differentially,jain2012differentially,zhang2012functional,yu2014differentially,chaudhuri2011differentially}. Using the same principles, \cite{nguyen2017differentially} further developed a differentially private regression model for survival analysis. This approach is limited to the ``multivariate" regression models and cannot be used for direct differentially private estimation of the survival function. Differentially private generative models such as the differentially private generative adversarial networks \cite{xie2018differentially,zhang2018differentially,esteban2017real,triastcyn2018generating,beaulieu2017privacy,yoon2018pategan} have been recently proposed. But, as discussed in the introduction, they are not suitable for generating data for survival function estimation.

\section{Conclusion}
We have presented the first method for differentially private estimation of the survival function and we have shown that our proposed method can be easily extended to differentially private estimation of ``other" often used statistics such as the associated confidence intervals, test statistics, and to other estimates such as the competing risk cumulative incidence and the Nelson-Aalen estimate of the hazard function. With extensive empirical evaluation on eleven real-life datasets, we have shown that our proposed method provides a good privacy-utility trade-off. For our future work, we would like to investigate the impact of considering different privacy definitions and noise distributions.

\section{Acknowledgement}
Lovedeep Gondara is supported by an NSERC (Natural Sciences and Engineering Research Council of Canada) CGS D award.

\bibliographystyle{ieeetr}
\bibliography{nips}
\end{document}